\newcommand{\Romannumber}[1]{\uppercase\expandafter{\romannumeral #1}}
\newcommand{\complex}{\mathbb{C}}
\newcommand{\real}{\mathbb{R}}
\newcommand{\im}{\mathbf{i\,}}
\DeclareMathOperator{\erf}{erf}
\DeclareMathOperator{\erfc}{erfc}
\DeclareMathOperator{\erfi}{erfi}
\DeclareMathOperator{\var}{Var}
\DeclareMathOperator{\sinc}{sinc}
\theoremstyle{definition} 
\theoremstyle{remark}     \newtheorem{remark}{Remark}
\theoremstyle{remark}     
\theoremstyle{plain}      \newtheorem{theorem}{Theorem}%[section]
\theoremstyle{plain}      
\theoremstyle{plain}      
\theoremstyle{plain}      \newtheorem{corollary}[theorem]{Corollary}
\theoremstyle{plain}
\begin{document}

%%%%%%%%%%%%%%%%%%%%%%%%%%%%%%%%%%%%%%%%%%%%%%%%%%%%%%%%%%%%
\title{\textsf{On Bochner's and Polya's Characterizations of Positive-Definite Kernels and the Respective Random Feature Maps}}
\author{Jie Chen\thanks{IBM Thomas J. Watson Research Center. Email: \texttt{chenjie@us.ibm.com}}
  \and Dehua Cheng\thanks{University of Southern California. Email: \texttt{(dehua.cheng, yanliu.cs)@usc.edu}}
  \and Yan Liu\footnotemark[2]
}
\maketitle

%%%%%%%%%%%%%%%%%%%%%%%%%%%%%%%%%%%%%%%%%%%%%%%%%%%%%%%%%%%%
\begin{abstract}
Positive-definite kernel functions are fundamental elements of kernel methods and Gaussian processes. A well-known construction of such functions comes from Bochner's characterization, which connects a positive-definite function with a probability distribution. Another construction, which appears to have attracted less attention, is Polya's criterion that characterizes a subset of these functions. In this paper, we study the latter characterization and derive a number of novel kernels little known previously.

In the context of large-scale kernel machines, Rahimi and Recht (2007) proposed a random feature map (random Fourier) that approximates a kernel function, through independent sampling of the probability distribution in Bochner's characterization. The authors also suggested another feature map (random binning), which, although not explicitly stated, comes from Polya's characterization. We show that with the same number of random samples, the random binning map results in an Euclidean inner product closer to the kernel than does the random Fourier map. The superiority of the random binning map is confirmed empirically through regressions and classifications in the reproducing kernel Hilbert space.
\end{abstract}

%%%%%%%%%%%%%%%%%%%%%%%%%%%%%%%%%%%%%%%%%%%%%%%%%%%%%%%%%%%%
\section{Introduction}
A positive-definite function (also coined \emph{kernel} in this paper) is a complex-valued function $k:\real\to\complex$ such that for any $n$ real numbers $x_1,x_2,\ldots,x_n$, the matrix $K$ with elements $K_{ij}=k(x_i-x_j)$ is positive semi-definite. A well-known relationship between positive-definite functions and probability distributions is given by the celebrated Bochner's theorem~\cite{Stein1999}, which states that a continuous function $k$ with $k(0)=1$ is positive-definite if and only if it is the characteristic function (cf) of some random variable $X$. Let $F(x)$ be the cumulative distribution function (cdf) of $X$; then, the if-and-only-if condition is written in the form of a Stieltjes integral as
\begin{equation}\label{eqn:rff}
k(r)=E[e^{\im rX}]=\int_{\real} e^{\im rx}\,dF(x).
\end{equation}
A practical significance of this characterization is that one may construct a positive-definite function from a probability distribution. For example, the squared exponential kernel\footnote{Also called the Gaussian kernel.} is constructed from the normal distribution, the exponential kernel from the Cauchy distribution\footnote{It turns out that the exponential kernel can be generalized to high dimensions by taking either the 1-norm or the 2-norm of the inputs. In the 1-norm case, the kernel is also called the Laplace kernel and the distribution is a tensor product of one-dimensional Cauchy distributions. In the 2-norm case, the distribution is multivariate Cauchy. More discussions appear in Section~\ref{sec:discuss}.}, and the Cauchy kernel from the Laplace distribution. Positive-definite functions are of vital importance in kernel methods and Gaussian processes. The kernel $k$ in kernel methods defines a reproducing kernel Hilbert space (RKHS)~\cite{Aronszajn1950}, from which an optimal prediction function is sought with respect to some risk functional~\cite{Schoelkopf2001a,Hastie2009}. In Gaussian processes, $k$ serves as a covariance function and its Fourier transform, coined \emph{spectral density}, dictates the smoothness and other behavior of the process~\cite{Stein1999,Wendl2004,Rasmussen2006,Chiles2012}.

Another approach for constructing kernels from probability distributions, which appears to have attracted less attention, comes from Polya's criterion~\cite{Durrett2010}, which states that for any real continuous and even function $k$ convex on $[0,\infty)$ with $k(0)=1$ and $k(\infty)=0$, there exists a random variable $X$ with a positive support and a cdf $F(x)$ such that
\begin{equation}\label{eqn:k}
k(r)=\int_{\real}\max\left\{0,1-\frac{|r|}{|x|}\right\}\,dF(x).
\end{equation}
An informal argument why $k$ is positive-definite, is that the integrand in~\eqref{eqn:k} is the triangular function, whose Fourier transform is the squared sinc function that is nonnegative. With slightly extra work, in Section~\ref{sec:construction}, we show that the converse of Polya's criterion is also true; that is, given any cdf $F$ with a positive support, the function $k$ defined in~\eqref{eqn:k} possesses the said properties. Hence, \eqref{eqn:k} in fact characterizes a subset of positive-definite functions, the most salient property being convexly decreasing on $[0,\infty)$; and the respective probability distributions are those positively supported. We study in depth Polya's criterion and its consequences, particularly in the context of kernel constructions, in Section~\ref{sec:construction}. Then, in Section~\ref{sec:dist}, we consider a number of example distributions and derive explicit expressions for $k$ and the associated Fourier transform. Such distributions include Poisson, gamma, Nakagami, Weibull, and other distributions that are special cases of the last three (e.g., exponential, chi-square, chi, half-normal, and Rayleigh).

One may recall that~\eqref{eqn:k} resembles an equality established by Rahimi and Recht~\cite{Rahimi2007}
\begin{equation}\label{eqn:rbf}
k(r)=\int_0^{\infty}\max\left\{0,1-\frac{r}{x}\right\}x k''(x)\,dx, \qquad r\ge0,
\end{equation}
for any twice differentiable function $k$ on $(0,\infty)$ that vanishes at the infinity. Indeed, if $x k''(x)$ integrates to unity, it could be considered the probability density function (pdf) associated to $F$. Two important distinctions, however, should be noted. First, the expression~\eqref{eqn:rbf} implicitly assumes the existence of a pdf, which occurs only for (well behaved) continuous distributions. To the contrary,~\eqref{eqn:k}, in the form of a Stieltjes integral, is more general, defined for distributions including notably discrete ones. Such does not contradict with~\eqref{eqn:rbf}, because a kernel function constructed from a discrete distribution may not be twice differentiable on $(0,\infty)$; in fact, it is at most once differentiable.

Second, \eqref{eqn:k} and~\eqref{eqn:rbf} result in methods that utilize the relationship between a kernel function and a probability distribution in a completely opposite direction. The work~\cite{Rahimi2007}, based on~\eqref{eqn:rbf}, starts from a known kernel $k$ and seeks a valid pdf for constructing random feature maps. On the other hand, our work, based on~\eqref{eqn:k}, focuses on constructing new kernels. The theoretical appeal of~\eqref{eqn:k} guarantees that the so defined function $k$ is always a valid kernel; and the prosperous results in probability distributions provide a practical opportunity to derive explicit formulas for novel kernels $k$ little known previously.

Whereas the mathematical properties of the proposed kernels are interesting in their own right, the work here stems from a practical purpose: we are interested in comparing the quality of the random feature maps resulting from~\eqref{eqn:rff} and~\eqref{eqn:k}, if possible, for the same kernel function $k$. A computational bottleneck in many kernel and Gaussian process applications is the factorization of the $n\times n$ shifted kernel matrix $K+\lambda I$, whose memory cost and time cost scale as $O(n^2)$ and $O(n^3)$, respectively, if no particular structures of $K$ are known other than symmetry. A class of methods aiming at reducing the computational costs is the random feature approaches~\cite{Rahimi2007,Kar2012,Vedaldi2012,Le2013,Yang2014}, which map a data point $x$ to a random vector $\bm{z}(x)$ such that $\bm{z}(x)^T\bm{z}(x')$ approximates $k(x-x')$ for any pair of points $x$ and $x'$. In the matrix form, let $\bm{z}$ be a column vector and let $Z$ be the matrix $[\bm{z}(x_1),\bm{z}(x_2),\ldots,\bm{z}(x_n)]$; then, $Z^TZ$ approximates $K$.

Probably the most well-known and used random feature map is random Fourier (see, e.g., the original publication~\cite{Rahimi2007}, a few extensions~\cite{Yen2014,Dai2014,Sindhwani2015}, analysis~\cite{Yang2012,Wu2016}, and applications~\cite{Huang2014,Chen2016}); whereas a less popular, but more effective one as we argue in this paper, is random binning (see the same publication~\cite{Rahimi2007}). The random Fourier approach uses~\eqref{eqn:rff} to construct a dense $Z$ of size $D\times n$, where $D$ denotes the number of random Fourier samples. The random binning approach, as we extend in this work for an arbitrary distribution positively supported, uses~\eqref{eqn:k} to construct a sparse $Z$ where each column has $D'$ nonzeros, with $D'$ denoting the number of random binning samples. We analyze in Section~\ref{sec:var} that $Z^TZ$ better approximates $K$ by using the latter approach, if $D$ is taken to be the same as $D'$. In other words, for a matching approximation quality, $D'$ may be (much) smaller than $D$. Such an observation supports the use of the proposed formula~\eqref{eqn:k} for kernel construction and approximation.

Note that analysis of the two random feature approaches exists in other works. Rahimi and Recht~\cite{Rahimi2007} give probabilistic bounds for the uniform error $\sup_{x,x'}|\bm{z}(x)^T\bm{z}(x')-k(x-x')|$. These results, however, do not directly compare the two approaches as we do. Wu et.~al~\cite{Wu2016} consider the setting of risk functional minimization in the RKHS and bound the bias of the computed function from the optimal one, when the minimization is done through coordinate descent by taking one sample at a time. They argue that the optimization converges faster for the random binning approach, in the sense that if the same number of samples/iterations are used, the bias has a smaller upper bound. On the other hand, our analysis focuses on the matrix approximation error and gives exact values rather than bounds. As a result, the analysis also favors the random binning approach. Experimental results that gauge regression and classification performance further confirm the superiority of this approach; see Section~\ref{sec:exp}.

We summarize the contributions of this work and conclude in Section~\ref{sec:conclude}.

%%%%%%%%%%%%%%%%%%%%%%%%%%%%%%%%%%%%%%%%%%%%%%%%%%%%%%%%%%%%
\section{Polya's Characterization}\label{sec:construction}
We start with the formal statement of Polya.

\begin{theorem}[Polya's criterion]\label{thm:polya}
If $k:\real\to[0,1]$ is a real, continuous and even function with $k(0)=1$, $\lim_{r\to\infty}k(r)=0$, and $k$ is convex on $[0,\infty)$, then there exists a cumulative distribution function $F(x)$ on $(0,\infty)$ such that
\begin{equation}\label{eqn:k2}
k(r)=\int_0^{\infty}\max\left\{0,1-\frac{|r|}{|x|}\right\}\,dF(x).
\end{equation}
Hence, $k$ is a characteristic function.
\end{theorem}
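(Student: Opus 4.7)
The strategy is to read off a probability measure from the second-derivative structure of $k$ and then check the integral identity by an elementary Fubini argument. Since $k$ is convex on $[0,\infty)$, continuous, with $k(0)=1$ and $k(\infty)=0$, it is nonincreasing on $[0,\infty)$ and its right derivative $k'_+$ exists at every point of $(0,\infty)$, is nondecreasing, and is nonpositive. Because $k(r)=1+\int_0^r k'_+(s)\,ds$ with $k(r)\to 0$, the function $-k'_+$ is nonnegative, nonincreasing, and $\int_0^\infty (-k'_+(s))\,ds=1$; in particular $k'_+(r)\to 0$ as $r\to\infty$. The idea is that the Lebesgue--Stieltjes measure $d\mu:=dk'_+$ on $(0,\infty)$ is nonnegative, and one should set
\[
 dF(x) \;:=\; x\,dk'_+(x),\qquad x>0,
\]
which is the ``natural'' weighting that will invert the triangular-function representation.

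To see that $F$ is a bona fide cdf, I would first note that $dF$ is a nonnegative measure since $x>0$ and $d\mu\ge 0$. For the total mass I would integrate by parts in the Stieltjes sense,
\[
 \int_0^\infty x\,dk'_+(x) \;=\; \bigl[x k'_+(x)\bigr]_0^{\infty} - \int_0^\infty k'_+(x)\,dx \;=\; 0 - (k(\infty)-k(0)) \;=\; 1,
\]
provided the boundary term vanishes. This is the one point that needs care: because $-k'_+$ is nonincreasing and integrable on $(0,\infty)$, a standard tail/head argument (if $-k'_+(s)\ge c/s$ on a neighborhood of $0$ or $\infty$ the integral diverges) shows $x k'_+(x)\to 0$ at both endpoints. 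This is where I expect the main technical obstacle to lie: handling the non-smooth case and justifying the Stieltjes integration by parts rigorously, especially the behavior at $x=0^+$ when $k'_+(0^+)=-\infty$ is allowed.

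With $F$ in hand, the integral representation is a one-line Fubini computation. For $r\ge 0$,
\[
 \int_0^\infty \max\!\left\{0,1-\tfrac{r}{x}\right\} dF(x) \;=\; \int_r^\infty \bigl(1-\tfrac{r}{x}\bigr) x\,dk'_+(x) \;=\; \int_r^\infty (x-r)\,dk'_+(x).
\]
Writing $x-r=\int_r^x ds$ and swapping the order of integration gives
\[
 \int_r^\infty\!\!\int_s^\infty dk'_+(x)\,ds \;=\; \int_r^\infty \bigl(k'_+(\infty)-k'_+(s)\bigr)\,ds \;=\; -\int_r^\infty k'_+(s)\,ds \;=\; k(r)-k(\infty) \;=\; k(r).
\]
The identity for $r<0$ follows from the evenness of both sides, proving \eqref{eqn:k2}.

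Finally, for the closing assertion that $k$ is a characteristic function, I would note that each triangular function $r\mapsto\max\{0,1-|r|/x\}$ is itself a characteristic function (it is the Fourier transform of the nonnegative, integrable, normalized squared-sinc density $\tfrac{x}{2\pi}\sinc^2(xt/2)$). Hence \eqref{eqn:k2} expresses $k$ as a mixture of characteristic functions against the probability measure $dF$, and mixtures of characteristic functions are characteristic functions, completing the proof.
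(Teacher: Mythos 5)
Your argument is correct, and it is essentially the standard proof that the paper itself defers to (the paper gives no proof of its own, citing Durrett's Theorem 3.3.10); your measure $dF(x)=x\,dk'_+(x)$ is exactly the Lebesgue--Stieltjes differential of the cdf $F(x)=1-k(x)+xg(x)$, $g=k'_+$, that the paper records immediately after the theorem. The one step you flag as delicate, the vanishing of the boundary term $x\,k'_+(x)$ at $0^+$, is handled most cleanly not by the ``$\ge c/s$'' divergence heuristic (which by itself only controls a $\liminf$) but by monotonicity: since $-k'_+$ is nonincreasing, $x\bigl(-k'_+(x)\bigr)\le\int_0^x\bigl(-k'_+(s)\bigr)\,ds=1-k(x)\to0$, and the analogous bound $x\bigl(-k'_+(x)\bigr)\le 2\int_{x/2}^x\bigl(-k'_+(s)\bigr)\,ds$ settles the endpoint at infinity. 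With that tightening, and noting that the open-versus-closed endpoint ambiguity in $\int_s^\infty dk'_+$ affects only countably many $s$ and hence not the outer $ds$ integral, the proof is complete.
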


\begin{proof}
See, e.g., proof of Theorem 3.3.10 in~\cite{Durrett2010}.
\end{proof}

Polya's criterion sheds deep insights between a kernel $k$ and a cdf $F$ connected by the relation~\eqref{eqn:k2}. Let us stress a few.

First, being a characteristic function is equivalent to being positive-definite with $k(0)=1$, a consequence of Bochner's theorem. The positive definiteness comes from the fact that the integrand in~\eqref{eqn:k2} is a triangular function with scaled width $|x|$. Based on the well-known relation
\[
\int_{\real}e^{\im rt}\max\left\{0,1-|r|\right\}\,dr=\frac{4}{t^2}\sin\left(\frac{t}{2}\right)^2,
\]
if $k$ is absolutely integrable, then $k$ admits an inverse Fourier transform
\begin{equation}\label{eqn:ft}
\frac{1}{2\pi}\int_{-\infty}^{\infty}k(r)e^{-\im rt}\,dr
=\frac{1}{2\pi}\int_0^{\infty}x\sinc^2\left(\frac{xt}{2}\right)\,dF(x)=:h(t),
\end{equation}
where $\sinc(x)=\sin(x)/x$. Clearly, $h$ is nonnegative for all $t$. Then, the Bochner's characterization~\eqref{eqn:rff} is satisfied with a stronger condition for the cdf, one that admits a density:
\[
k(r)=\int_{\real}e^{\im rx}h(x)\,dx.
\]
If $k$ is not absolutely integrable, one invokes L\'{e}vy's continuity theorem and shows that a sequence of absolutely integrable and positive-definite functions converge to $k$. Both cases conclude that $k$ is positive-definite.

Second, the cdf $F$ in~\eqref{eqn:k2} may be constructed as
\[
F(x)=1-k(x)+xg(x)
\qquad\text{with}\qquad
g(x)=\lim_{\delta\to0^+}\frac{k(x+\delta)-k(x)}{\delta}.
\]
Here, $g$ is the right derivative of $k$. By the continuity and convexity of $k$, $g$ is well defined. Clearly, if $k$ is differentiable, then $F(x)=1-k(x)+xk'(x)$. Further, if $k$ is twice differentiable, we have that $F$ is differentiable with
\[
F'(x)=xk''(x),
\]
which recovers~\eqref{eqn:rbf} established in~\cite{Rahimi2007}.

Third, it is important to note that $F$ is supported on $(0,\infty)$, not $[0,\infty)$. If one considers the domain of a cdf to be the whole real line, then the requirement for $F$ in the theorem may be equivalently stated as $F(x)=0$ for all $x\le0$. Apart from an obvious practical constraint seen later, that the random variable $X$ will be used as the width of a bin, which must be positive, we particularly note that $F(0)$ cannot be nonzero. Such a constraint is naturally satisfied by continuous distributions, because $F$ must be continuous at $0$. However, a discrete distribution may assign a nonzero mass for $X=0$, which makes $F(0)\ne0$, a case we must rule out in the theorem. The reason is that if $\Pr(X=0)$ is nonzero, then $dF(x)$ makes a nontrivial contribution to the Stieltjes integral~\eqref{eqn:k2} when $x$ approaches $0$ from the right. In such a case, $k(r)$ does not converge to $1$ when $r\to0$. In other words, we have to sacrifice either the equality $k(0)=1$ or the continuity of $k$ in the theorem, if we want to relax the support of $F$ to $[0,\infty)$ with particularly allowing $F(0)\ne0$. This is not a sacrifice we make in this paper. Later in Section~\ref{sec:dist} when we construct kernels from discrete distributions, we will reiterate the requirement that $\Pr(X=0)=0$.

It is not hard to show that the converse of Theorem~\ref{thm:polya} is also true. Then, we strengthen the theorem into the following result and call it \emph{Polya's characterization}. The significance is that any distribution on $(0,\infty)$ defines a positive-definite function, an additional characterization besides that of Bochner's.

\begin{corollary}\label{cor:polya}
A real function $k$ is continuous and even with $k(0)=1$, $\lim_{r\to\infty}k(r)=0$, and convex on $[0,\infty)$, if and only if there exists a cdf $F(x)$ with positive support such that
\[
k(r)=\int_0^{\infty}\max\left\{0,1-\frac{|r|}{|x|}\right\}\,dF(x).
\]
Moreover, all such functions $k$ are positive-definite.
\end{corollary}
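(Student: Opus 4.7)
The plan is to observe that the ``only if'' direction of the claimed equivalence is exactly the content of Polya's criterion (Theorem~\ref{thm:polya}), which is already available to us. So the real work is the converse: starting from an arbitrary cdf $F$ with support contained in $(0,\infty)$, define $k$ by the stated Stieltjes integral and verify, in order, the five analytic properties listed, followed by positive-definiteness.

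For the bookkeeping properties I would dispatch each one directly. Realness and evenness are immediate because the integrand is real and depends on $r$ only through $|r|$. The normalization $k(0)=1$ comes from substituting $r=0$, which makes the integrand identically $1$ and hence integrates against $dF$ to $1$. For $\lim_{r\to\infty}k(r)=0$ I would invoke the dominated convergence theorem: for each fixed $x>0$, the integrand tends to $0$ as $|r|\to\infty$, and it is dominated by $1\in L^1(dF)$. Continuity of $k$ on $\real$ is another dominated convergence argument, since $r\mapsto\max\{0,1-|r|/|x|\}$ is continuous for each fixed $x>0$ and again bounded by $1$; here the hypothesis that $F$ puts no mass at $0$ (i.e., $F(0)=0$) is what guarantees continuity at $r=0$, exactly as flagged in the third remark following Theorem~\ref{thm:polya}. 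Convexity on $[0,\infty)$ reduces to the pointwise fact that, for every fixed $x>0$, the tent function $r\mapsto\max\{0,1-r/x\}$ is convex on $[0,\infty)$ (as the maximum of two convex functions); convexity is preserved under integration against a nonnegative measure, so $k$ inherits it.

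For positive-definiteness I would reuse the argument already sketched in the body of the text after Theorem~\ref{thm:polya}. If $k$ is absolutely integrable, Fubini on equation~\eqref{eqn:ft} expresses its inverse Fourier transform as a nonnegative mixture of $\sinc^2$ kernels, yielding a nonnegative spectral density $h$; Bochner's theorem then gives positive-definiteness. If $k$ is not absolutely integrable, I would approximate $F$ by the truncated cdfs $F_n$ supported in $[1/n,n]$, observe that the corresponding $k_n$ are absolutely integrable and pointwise converge to $k$, and transfer positive-definiteness through L\'evy's continuity theorem.

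I do not expect any single step to be a serious obstacle. The only delicate point is making sure $F(0)=0$ is invoked at the right places (continuity at $0$ and the behavior of the integrand as $x\to 0^+$); everything else is a routine application of dominated convergence and the preservation of convexity and positive-definiteness under mixtures.
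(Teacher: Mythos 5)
Your proposal is correct and follows essentially the same route as the paper: the ``only if'' direction is delegated to Theorem~\ref{thm:polya}, the routine properties ($k(0)=1$, evenness, continuity, vanishing at infinity) are dispatched directly, convexity is obtained from the pointwise convexity of the tent function $r\mapsto\max\{0,1-r/x\}$ together with positivity of the integral (the paper just writes out this pointwise inequality as an explicit three-case comparison rather than citing ``max of affine functions''), and positive-definiteness is inherited from the Fourier/L\'evy argument already given after Theorem~\ref{thm:polya}. Your added details---the dominated-convergence justifications and the explicit truncation of $F$ to $[1/n,n]$ for the L\'evy continuity step---are sound refinements of steps the paper labels ``clearly,'' not a different approach.
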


\begin{proof}
Theorem~\ref{thm:polya} corresponds to the ``only if'' part. Hence, it suffices to show the ``if ''part. Clearly, $k$ is continuous, even, and satisfies $k(0)=1$ and $\lim_{r\to\infty}k(r)=0$. We therefore focus on only the convexity.

Let $r_1\ge0$, $r_2\ge0$, $r_1\ne r_2$, and $t\in[0,1]$. Define
\[
L=\max\left\{0,\,\,t\left[1-\frac{r_1}{|x|}\right]+(1-t)\left[1-\frac{r_2}{|x|}\right]\right\}
\]
and
\[
R=\max\left\{0,\,\,t\left[1-\frac{r_1}{|x|}\right]\right\}
+\max\left\{0,\,\,(1-t)\left[1-\frac{r_2}{|x|}\right]\right\}.
\]
When $r_1$ and $r_2$ are on the same side of $|x|$, we have $L=R$. When $r_1\le|x|\le r_2$, we have
\[
L\le\max\left\{0,\,\,t\left[1-\frac{r_1}{|x|}\right]\right\}\le R.
\]
Similarly, when $r_2\le|x|\le r_1$, we have
\[
L\le\max\left\{0,\,\,(1-t)\left[1-\frac{r_2}{|x|}\right]\right\}\le R.
\]
Hence, all cases point to that $L\le R$. Therefore,
\[
k(tr_1+(1-t)r_2)\le tk(r_1)+(1-t)k(r_2),
\]
concluding the convexity of $k$ on $[0,\infty)$.
\end{proof}

Because the central subject of this paper, the function $k$ in Corollary~\ref{cor:polya}, is even, its Fourier transform and inverse transform differ by only a factor of $2\pi$. In what follows, we do not distinguish the two transforms and consider only the forward one, with formal notation
\[
\mathcal{F}[k](t)\equiv\int_{-\infty}^{\infty}k(r)e^{\im rt}\,dr.
\]
We will also use Fourier transforms in the more general setting---one that is defined for generalized functions---which does not require $k$ to be absolutely integrable.

%%%%%%%%%%%%%%%%%%%%%%%%%%%%%%%%%%%%%%%%%%%%%%%%%%%%%%%%%%%%
\subsection{Special Case}\label{sec:special.case}
Based on the foregoing, because $F(x)=0$ for all $x\le0$, we may get rid of the max operator and write equivalently,
\begin{equation}\label{eqn:k.alt}
k(r)=\int_r^{\infty}\left(1-\frac{r}{x}\right)\,dF(x)
=\int_r^{\infty}dF(x)
-r\int_r^{\infty}\frac{dF(x)}{x}, \qquad r\ge0,
\end{equation}
omitting the obvious symmetric part $r<0$. The second term on the right-hand side of~\eqref{eqn:k.alt}, $r\int_r^{\infty}(1/x)dF(x)$, is finite when $r\to0^+$, but not necessarily when the front factor $r$ is dropped. In this subsection, we consider the special, but not-so-infrequent case, when
\[
\int_r^{\infty}\frac{dF(x)}{x}
\]
indeed converges to a finite number as $r\to0^+$. A benefit of considering this case is that we may introduce another random variable to simplify the expressions for $k$ and its Fourier transform sometimes. Later in Section~\ref{sec:dist} we show quite a few such examples. For convenience, the integration limit starts from $-\infty$ rather than $0$.

Formally, let $X$ be a random variable with cdf $F(x)$, where $F(x)=0$ for all $x\le0$. If
\begin{equation}\label{eqn:C}
C:=\int_{-\infty}^{\infty}\frac{dF(x)}{x}
\end{equation}
is finite, define
\begin{equation}\label{eqn:tilde.F}
\widetilde{F}(x):=\int_{-\infty}^x\frac{dF(t)}{Ct}.
\end{equation}
Because $\widetilde{F}(-\infty)=0$, $\widetilde{F}(\infty)=1$, and $\widetilde{F}$ is nondecreasing and right continuous, it is the cdf of some random variable $\widetilde{X}$. The following theorem gives the expressions of $k$ and its Fourier transform by using some quantities with respect to $X$ and $\widetilde{X}$. For notational consistency, we will use $F_{\widetilde{X}}$ to replace $\widetilde{F}$ when appropriate.

\begin{theorem}\label{thm:special.case}
Denote by $F_Z$ and $\varphi_Z$ the cdf and the cf of a random variable $Z$, respectively. If $C$ defined in~\eqref{eqn:C} is finite and $\widetilde{X}$ is the respective random variable of $\widetilde{F}$ defined in~\eqref{eqn:tilde.F}, then,
\[
k(r)=[1-F_X(r)]-Cr[1-F_{\widetilde{X}}(r)], \qquad r\ge0,
\]
and
\[
\mathcal{F}[k](t)
=\frac{C}{t^2}[2-\varphi_{\widetilde{X}}(t)-\varphi_{\widetilde{X}}(-t)].
\]
\end{theorem}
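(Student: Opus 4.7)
The plan is to split the theorem into two elementary computations, both hinging on the observation that the definition~\eqref{eqn:tilde.F} is precisely the change-of-measure statement $dF(x)/x = C\, d\widetilde F(x)$ on $(0,\infty)$. With this in hand, the first identity will fall out of the alternative form~\eqref{eqn:k.alt} for $k$, while the Fourier transform identity will fall out of the expression~\eqref{eqn:ft} combined with the elementary trigonometric identity $4\sin^2(u) = 2 - e^{2\im u} - e^{-2\im u}$.

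For the formula for $k(r)$, I would start from~\eqref{eqn:k.alt}. The first term $\int_r^{\infty} dF(x)$ is by definition $1 - F_X(r)$. For the second term, the change of measure yields
\[
\int_r^{\infty}\frac{dF(x)}{x} = C\int_r^{\infty} d\widetilde F(x) = C\bigl[1 - F_{\widetilde X}(r)\bigr],
\]
and multiplying by $r$ and assembling the pieces gives $k(r) = [1-F_X(r)] - Cr[1 - F_{\widetilde X}(r)]$. For the Fourier transform, I would invoke~\eqref{eqn:ft}: since $k$ is real and even, the forward and inverse transforms agree up to a factor of $2\pi$, giving $\mathcal{F}[k](t) = \int_0^{\infty} x\sinc^2(xt/2)\, dF(x)$. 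Rewriting the integrand as $x\sinc^2(xt/2) = 4\sin^2(xt/2)/(xt^2) = (2 - e^{\im xt} - e^{-\im xt})/(xt^2)$, applying the change of measure to convert $dF(x)/x$ into $C\, d\widetilde F(x)$, and recognizing $\int_0^{\infty} e^{\pm\im xt}\, d\widetilde F(x) = \varphi_{\widetilde X}(\pm t)$ (valid because $\widetilde X$ is supported on $(0,\infty)$) immediately yield the claim.

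I do not anticipate any genuine obstacle. Two mild technicalities are worth noting. First, the derivation~\eqref{eqn:ft} formally presumes $k$ is absolutely integrable, and when it is not, one falls back on the generalized-function viewpoint already flagged at the end of Section~\ref{sec:construction}. Second, the quotient $[2 - \varphi_{\widetilde X}(t) - \varphi_{\widetilde X}(-t)]/t^2$ looks singular at $t = 0$, but the singularity is removable: a second-order Taylor expansion of $\varphi_{\widetilde X}$ at the origin gives limit $\mean[\widetilde X^2]$, which by the change of measure equals $\mean[X]/C$, matching the direct evaluation $\mathcal{F}[k](0) = \int_0^{\infty} x\, dF(x) = \mean[X]$ once multiplied by $C$; so the formula extends continuously through the origin whenever the relevant moments are finite.
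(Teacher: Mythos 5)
Your proposal is correct and follows essentially the same route as the paper: the $k(r)$ identity read off from~\eqref{eqn:k.alt} via the change of measure $dF(x)/x = C\,d\widetilde{F}(x)$, and the Fourier transform obtained from~\eqref{eqn:ft} by rewriting $x\sinc^2(xt/2)$ as $(2-2\cos(xt))/(xt^2)$ and recognizing the characteristic functions. Your remarks on the removable singularity at $t=0$ and the consistency check $\mathcal{F}[k](0)=E[X]$ are correct additions beyond what the paper records, but they do not change the argument.
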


\begin{proof}
The expression of $k$ is straightforward in light of~\eqref{eqn:k.alt}. To show the Fourier transform, we apply~\eqref{eqn:ft} and write
\[
\int_{-\infty}^{\infty}k(r)e^{\im rt}\,dr=
\frac{1}{t^2}\int_{-\infty}^{\infty}\frac{2-2\cos(xt)}{x}\,dF(x)=
\frac{C}{t^2}\int_{-\infty}^{\infty}[2-2\cos(xt)]\,d\widetilde{F}(x).
\]
Then, we have
\[
\frac{C}{t^2}\int_{-\infty}^{\infty}[2-2\cos(xt)]\,d\widetilde{F}(x)
=\frac{C}{t^2}\left[\int_{-\infty}^{\infty}2\,d\widetilde{F}(x)
-\int_{-\infty}^{\infty}e^{\im xt}\,d\widetilde{F}(x)
-\int_{-\infty}^{\infty}e^{-\im xt}\,d\widetilde{F}(x)\right],
\]
which simplifies to the second equality in the theorem.
\end{proof}

This theorem is extensively applied in Section~\ref{sec:dist}. Let us note two cases. For the case of discrete distributions, let $S$ be the support and denote by $f$ the probability mass function (pmf). Then, \eqref{eqn:C} and~\eqref{eqn:tilde.F} read
\begin{equation}\label{eqn:C1}
C=\sum_{x\in S}\frac{f(x)}{x}
\quad\text{and}\quad
\widetilde{f}(x)=\frac{f(x)}{Cx},
\end{equation}
where $\widetilde{f}$ is the pmf of the new random variable $\widetilde{X}$ stated in the theorem. In particular, if the elements of $S$ are all $\ge1$, or if the number of elements $<1$ is finite, or if the number of elements $<1$ is infinite but all are bounded away from $0$, then $C$ must be finite.

For the case of continuous distributions, if $F$ is differentiable on $(0,\infty)$ and $f$ is the corresponding pdf (i.e., $f=F'$), then~\eqref{eqn:C} and~\eqref{eqn:tilde.F} become
\begin{equation}\label{eqn:C2}
C=\int_0^{\infty}\frac{f(x)}{x}\,dx
\quad\text{and}\quad
\widetilde{f}(x)=\frac{f(x)}{Cx},
\end{equation}
where $\widetilde{f}$ is the pdf of the new random variable $\widetilde{X}$ stated in the theorem.

%%%%%%%%%%%%%%%%%%%%%%%%%%%%%%%%%%%%%%%%%%%%%%%%%%%%%%%%%%%%
\subsection{Scaling}\label{sec:scaling}
Substantial experiences in kernel methods suggest that the spread of a kernel is one of the most important factors that affect the performance of a regression/classification. A well-known (though improper) example is the scale parameter $\sigma$ in a squared exponential kernel $k(r)=\exp[-r^2/(2\sigma^2)]$. This example is improper because the kernel does not correspond to any cdf $F$ in~\eqref{eqn:k}; nevertheless, the spirit of the example is that one needs to properly scale a kernel in order to achieve optimal results.

Hence, we introduce a scaling factor $\rho>0$ and turn $k(r)$ to $k(\rho r)$. Because of the vast difference in spreads among kernels constructed from different cdf's, a principled approach is to define $\rho=A/\tau$, where $A$ is used to standardize all kernels and $\tau$ is a tuning parameter that adjust the spread of the standardized kernel. One approach of standardization is to let $A$ be the area under curve, because then the area under $k(Ar)$ is $1$. The following result gives $A$.

\begin{theorem}\label{thm:area.under.curve}
If the random variable $X$ has a finite mean, then for $k$ defined in Corollary~\ref{cor:polya} we have
\begin{equation}\label{eqn:A}
\int_{-\infty}^{\infty}k(r)\,dr=E[X].
\end{equation}
\end{theorem}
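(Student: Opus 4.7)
The plan is to exchange the order of integration in the double integral defining $\int_{-\infty}^\infty k(r)\,dr$, evaluate the inner integral in $r$ explicitly, and recognize the remaining integral in $x$ as $E[X]$. Since $k$ is even, I would first write
\[
\int_{-\infty}^{\infty} k(r)\,dr = 2\int_0^{\infty} k(r)\,dr = 2\int_0^{\infty}\!\!\int_0^{\infty} \max\!\left\{0,\,1-\frac{r}{x}\right\}\,dF(x)\,dr,
\]
using that $F$ is supported on $(0,\infty)$ and $r\ge 0$, so $|r|=r$ and $|x|=x$ inside the integrand.

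Next, because the integrand is nonnegative and measurable, Tonelli's theorem justifies swapping the order of integration without any a priori finiteness assumption. After the swap I would fix $x>0$ and evaluate
\[
\int_0^{\infty}\max\!\left\{0,\,1-\frac{r}{x}\right\}\,dr = \int_0^{x}\left(1-\frac{r}{x}\right)\,dr = \frac{x}{2}.
\]
Substituting this back gives $\int_{-\infty}^\infty k(r)\,dr = 2\int_0^\infty (x/2)\,dF(x) = \int_0^\infty x\,dF(x) = E[X]$, which is the desired equality. The hypothesis that $X$ has a finite mean is used here to ensure the right-hand side is finite (and hence so is the area under $k$); Tonelli itself does not require it, but without finiteness of $E[X]$ the conclusion would only say both sides are $+\infty$.

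There is no serious obstacle: the only thing to be careful about is the Stieltjes formulation of~\eqref{eqn:k2}, which makes the Tonelli interchange the real content of the argument. Since both integrands are nonnegative and jointly measurable in $(r,x)$, Tonelli applies verbatim to the product measure $dr\otimes dF(x)$ on $(0,\infty)\times(0,\infty)$, so no uniform integrability or dominated-convergence subtlety arises. In summary, the proof reduces to: (i) exploit evenness of $k$, (ii) invoke Tonelli, (iii) compute the elementary triangular integral, (iv) identify the remaining expression as $E[X]$.
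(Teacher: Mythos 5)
Your proposal is correct and follows essentially the same route as the paper's proof: exploit evenness, interchange the order of integration, evaluate the inner triangular integral to $x/2$, and identify the result as $E[X]$. Your explicit appeal to Tonelli (nonnegativity of the integrand) is a slightly more careful justification of the interchange than the paper's, which simply cites the assumption $E[X]<\infty$, but the argument is the same.
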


\begin{proof}
Because $k$ is even, a direct calculation gives
\begin{align*}
\int_{-\infty}^{\infty}k(r)\,dr
&=2\int_0^{\infty}\left[\int_0^{\infty}\max\left\{0,1-\frac{r}{x}\right\}\,dF(x)\right]\,dr\\
&=2\int_0^{\infty}\left[\int_0^{\infty}\max\left\{0,1-\frac{r}{x}\right\}\,dr\right]\,dF(x)
=\int_0^{\infty}x\,dF(x),
\end{align*}
where the interchange of integration order is permissible under the assumption that $E[X]<\infty$.
\end{proof}

\begin{remark}
As a straightforward corollary, $\mathcal{F}[k](0)=E[X]$. 
\end{remark}

The scaling $\rho=A/\tau=E[X]/\tau$ is a key ingredient in parameter tuning when we compare the empirical performance of kernels. Note that with $k(r)$ scaled to $k(\rho r)$, the following facts occur simultaneously for continuous random variables:
\begin{enumerate}
\item The cdf that constructs $k(\rho r)$ is $F(\rho x)$;
\item The corresponding random variable is $X/\rho$;
\item The Fourier transform of $k(\rho r)$ evaluates to $\frac{1}{\rho}\mathcal{F}[k](\frac{t}{\rho})$.
\end{enumerate}
For discrete variables, the same facts hold, too; but be minded that the support is possibly changed (e.g., from integers to real numbers).

%%%%%%%%%%%%%%%%%%%%%%%%%%%%%%%%%%%%%%%%%%%%%%%%%%%%%%%%%%%%
\section{Example Kernels}\label{sec:dist}
An application of Polya's characterization is to construct positive-definite functions from known probability distributions. In this section, we consider a number of applicable distributions, either discrete or continuous, and derive explicit formulas for the corresponding kernel $k$ and its Fourier transform. There incur a number of special functions, whose definitions are given in Appendix~\ref{app:special.function}. The definitions generally conform to convention.

%%%%%%%%%%%%%%%%%%%%%%%%%%%%%%%%%%%%%%%%%%%%%%%%%%%%%%%%%%%%
\subsection{Constructed from (Shifted) Poisson Distribution}\label{sec:dist.poisson}
If $Y$ is a random variable of the Poisson distribution Pois$(\mu)$ with rate $\mu>0$, we have the following known facts:
\begin{enumerate}
\item pmf $\displaystyle f_Y(x)=\frac{\mu^xe^{-\mu}}{x!}$, $x=0,1,2,\ldots$
\item cdf $\displaystyle F_Y(x)=\frac{\Gamma(\lfloor x+1 \rfloor,\mu)}{\Gamma(\lfloor x+1 \rfloor)}$,
\item mean $E[Y]=\mu$,
\item cf $\varphi_Y(t)=\exp[\mu(e^{\im t}-1)]$,
\end{enumerate}
where $\Gamma(s)$ is the gamma function and $\Gamma(s,t)$ is the upper incomplete gamma function, with $t$ being the lower integration limit (see Appendix~\ref{app:special.function} for the formal definition).

Because the support of $Y$ includes zero, we shift the distribution and define $X=Y+1$, such that the value of the random variable starts from $1$. Then, we have
\[
f_X(x)=f_Y(x-1), \quad F_X(x)=F_Y(x-1), \quad E[X]=E[Y]+1, \quad x=1,2,\ldots
\]

To derive the kernel and its Fourier transform, consider the random variable $\widetilde{X}$ stated in Theorem~\ref{thm:special.case} and subsequently revealed by~\eqref{eqn:C1}. We write
\[
\frac{f_X(x)}{x}=\frac{f_Y(x-1)}{x}=\frac{1}{\mu}\cdot\frac{\mu^xe^{-\mu}}{x!}=\frac{1}{\mu}f_Y(x).
\]
Then, clearly,
\[
C=\frac{1}{\mu} \quad\text{and}\quad \widetilde{X}=Y.
\]
Thus, applying Theorem~\ref{thm:special.case}, we immediately obtain the kernel and the Fourier transform explicitly:
\begin{equation}\label{eqn:k.pois}
k(r)=\begin{dcases}
1-\frac{r}{\mu}\left[1-\frac{\Gamma(\lfloor r+1 \rfloor,\mu)}{\Gamma(\lfloor r+1 \rfloor)}\right], & 0\le r<1,\\
\left[1-\frac{\Gamma(\lfloor r \rfloor,\mu)}{\Gamma(\lfloor r \rfloor)}\right]
-\frac{r}{\mu}\left[1-\frac{\Gamma(\lfloor r+1 \rfloor,\mu)}{\Gamma(\lfloor r+1 \rfloor)}\right], & r\ge1,
\end{dcases}
\end{equation}
and
\begin{equation}\label{eqn:ft.pois}
\mathcal{F}[k](t)=\frac{2-\exp[\mu(e^{\im t}-1)]-\exp[\mu(e^{-\im t}-1)]}{\mu t^2}.
\end{equation}
Note that the constructed kernel $k$ is piecewise linear.

%%%%%%%%%%%%%%%%%%%%%%%%%%%%%%%%%%%%%%%%%%%%%%%%%%%%%%%%%%%%
\subsection{Constructed from Gamma Distribution}\label{sec:dist.gamma}
If $X$ is a random variable of the gamma distribution Gamma$(s,\theta)$ with shape $s>0$ and scale $\theta>0$, we have the following known facts:
\begin{enumerate}
\item pdf $\displaystyle f(x)=\frac{x^{s-1}e^{-x/\theta}}{\Gamma(s)\theta^s}$,
\item cdf $\displaystyle F_X(x)=1-\frac{\Gamma(s,x/\theta)}{\Gamma(s)}$,
\item mean $E[X]=\theta s$,
\item cf $\varphi_X(t)=(1-\im\theta t)^{-s}$.
\end{enumerate}

We discuss three cases of the shape $s$. When $s>1$, we write
\[
\frac{f(x)}{x}=\frac{1}{(s-1)\theta}\cdot\frac{x^{s-2}e^{-x/\theta}}{\Gamma(s-1)\theta^{s-1}}.
\]
Then, clearly, with respect to~\eqref{eqn:C2},
\[
C=\frac{1}{(s-1)\theta} \quad\text{and}\quad \widetilde{X}\sim \text{Gamma}(s-1,\theta).
\]
Applying Theorem~\ref{thm:special.case}, we immediately obtain the kernel and the Fourier transform explicitly:
\begin{gather}
k(r)= \frac{\Gamma(s,r/\theta)-r/\theta\cdot\Gamma(s-1,r/\theta)}{\Gamma(s)}, \quad r\ge0, \label{eqn:k.gamma.s}\\
\mathcal{F}[k](t)=\frac{2\left[1-\cos^{s-1}(\omega)\cos((s-1)\omega)\right]}{(s-1)\theta t^2},
\quad\text{with}\quad
\cos(\omega)=(1+\theta^2t^2)^{-\frac{1}{2}}. \label{eqn:ft.gamma.s}
\end{gather}

When $s=1$, the distribution Gamma$(s-1,\theta)$ is undefined. However, we may derive the kernel function directly from~\eqref{eqn:k.alt}:
\begin{equation}\label{eqn:k.gamma.1}
k(r)=e^{-r/\theta}-r/\theta\cdot E_1(r/\theta), \quad r\ge0,
\end{equation}
where $E_1$ is the exponential integral. The Fourier transform admits a closed form due to the known sine transform of $E_1$.

\begin{theorem}
For $k$ defined in~\eqref{eqn:k.gamma.1},
\begin{equation}\label{eqn:ft.gamma.1}
\mathcal{F}[k](t)=
\frac{\log(1+\theta^2t^2)}{\theta t^2}.
\end{equation}
\end{theorem}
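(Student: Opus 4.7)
The plan is to compute the Fourier transform directly from~\eqref{eqn:ft}, because the auxiliary-variable device of Theorem~\ref{thm:special.case} is not available at $s=1$: the constant $C$ in~\eqref{eqn:C2} equals $\int_0^\infty e^{-x/\theta}/(\theta x)\,dx=\infty$, so no $\widetilde{X}\sim\text{Gamma}(0,\theta)$ exists. Nevertheless $k$ is absolutely integrable on $\real$ (both $e^{-r/\theta}$ and the $E_1$ piece decay fast enough at infinity, and are finite at the origin), so~\eqref{eqn:ft} is directly applicable.

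First I would reduce the spectral integral to a single real one. Substituting the exponential density $f(x)=e^{-x/\theta}/\theta$ into~\eqref{eqn:ft} and using the identity $x\,\sinc^2(xt/2)=2(1-\cos(xt))/(xt^2)$, one gets
\[
\mathcal{F}[k](t) \;=\; \frac{2}{\theta t^2}\,I(t),\qquad I(t):=\int_0^\infty \frac{1-\cos(xt)}{x}\,e^{-x/\theta}\,dx.
\]
The integrand is integrable, since $(1-\cos(xt))/x=O(x)$ near the origin and $e^{-x/\theta}$ controls the tail; in particular $I(0)=0$.

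Next I would evaluate $I(t)$ by differentiating under the integral sign. The bound $|x\cos(xt)e^{-x/\theta}|\le xe^{-x/\theta}$ is integrable on $(0,\infty)$, so the derivative can be taken inside, yielding
\[
I'(t)=\int_0^\infty \sin(xt)\,e^{-x/\theta}\,dx = \frac{t}{1/\theta^2+t^2} = \frac{\theta^2 t}{1+\theta^2 t^2},
\]
by the standard Laplace-sine transform. Integrating from $0$ to $t$ and using $I(0)=0$ gives $I(t)=\tfrac{1}{2}\log(1+\theta^2 t^2)$, which on substitution back into $\mathcal{F}[k](t)=(2/\theta t^2)I(t)$ produces exactly~\eqref{eqn:ft.gamma.1}.

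The only obstacle is bookkeeping: justifying the differentiation under the integral (routine by dominated convergence) and the initial interchange between $\int dr$ and $\int dF$ used to arrive at~\eqref{eqn:ft}, both of which are already granted in the preceding exposition. A more elegant alternative is the Frullani-type evaluation $I(t)=\mathrm{Re}\int_0^\infty (e^{-x/\theta}-e^{-x(1/\theta-\im t)})/x\,dx=\mathrm{Re}\log(1-\im\theta t)=\tfrac{1}{2}\log(1+\theta^2 t^2)$, but this version needs a short contour-deformation argument to justify the Frullani formula for a complex exponent, so I prefer the real-variable differentiation route as the cleaner path.
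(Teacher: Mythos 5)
Your proof is correct, but it takes a genuinely different route from the paper's. The paper works directly with the explicit formula~\eqref{eqn:k.gamma.1}: it splits $\int_{-\infty}^{\infty}k(r)e^{\im rt}\,dr$ into a cosine transform of $e^{-\lambda r}$ and a cosine transform of $\lambda r E_1(\lambda r)$, handles the latter by integration by parts using $E_1'(r)=-e^{-r}/r$, and finally invokes the tabulated sine transform $\int_0^{\infty}E_1(\lambda r)\sin(rt)\,dr=\frac{1}{2t}\log(1+t^2/\lambda^2)$ from Bateman's tables. You instead bypass the explicit form of $k$ entirely and go back to the spectral representation~\eqref{eqn:ft}, which for the exponential density reduces the problem to the single integral
\[
I(t)=\int_0^{\infty}\frac{1-\cos(xt)}{x}\,e^{-x/\theta}\,dx,
\]
evaluated by differentiating under the integral sign; your justification via dominated convergence (with the dominating function $xe^{-x/\theta}$) and the observation $I(0)=0$ are both sound, and the arithmetic checks out. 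Your approach buys self-containedness: it needs only the elementary Laplace--sine transform and no special-function identities or external tables, and it sidesteps the boundary-term bookkeeping of the paper's integration by parts (where one must check that $rE_1(\lambda r)\sin(rt)/t$ vanishes at both endpoints). The paper's approach, on the other hand, is the one that generalizes to the other kernels in Section~\ref{sec:dist}, where the analogue of your $I(t)$ does not collapse to a logarithm and one genuinely needs transforms of $E_1$, $\erfc$, etc. Your correct remark that $C=\infty$ here, so that Theorem~\ref{thm:special.case} is unavailable, matches the reason the paper treats $s=1$ separately.
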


\begin{proof}
Based on~\eqref{eqn:k.gamma.1}, we perform a reparameterization $\lambda=1/\theta$ and write
\[
\int_{-\infty}^{\infty}k(r)e^{\im rt}\,dr=
2\int_0^{\infty}e^{-\lambda r}\cos(rt)\,dr
-2\int_0^{\infty}\lambda rE_1(\lambda r)\cos(rt)\,dr.
\]
The first term is a commonly used integral and it is evaluated to
\[
\int_0^{\infty}e^{-\lambda r}\cos(rt)\,dr=\frac{\lambda}{\lambda^2+t^2}.
\]
Then, we perform integration by parts on the second term. Noting that $E'_1(r)=-e^{-r}/r$, we obtain
\begin{align*}
\int_0^{\infty}rE_1(\lambda r)\cos(rt)\,dr
&=\left.rE_1(\lambda r)\frac{\sin(rt)}{t}\right\vert_0^{\infty}
-\int_0^{\infty}-\lambda rE'_1(\lambda r)\frac{\sin(rt)}{t}\,dr
-\int_0^{\infty}E_1(\lambda r)\frac{\sin(rt)}{t}\,dr\\
&=0+\frac{1}{t}\int_0^{\infty}e^{-\lambda r}\sin(rt)\,dr
-\frac{1}{t}\int_0^{\infty}E_1(\lambda r)\sin(rt)\,dr.
\end{align*}
The middle term is a commonly used integral and it is evaluated to
\[
\int_0^{\infty}e^{-\lambda r}\sin(rt)\,dr=\frac{t}{\lambda^2+t^2}.
\]
According to Section 2.11, Equation~(18) of~\cite[p.98]{Bateman1954}, we have for the third term
\[
\int_0^{\infty}E_1(\lambda r)\sin(rt)\,dr=\frac{1}{2t}\log\left(1+\frac{t^2}{\lambda^2}\right).
\]
Combining all these results, we obtain
\[
\int_{-\infty}^{\infty}k(r)e^{\im rt}\,dr=
\frac{\lambda}{t^2}\log\left(1+\frac{t^2}{\lambda^2}\right),
\]
which concludes the theorem.
\end{proof}

When $s<1$, the expression of $k$ in~\eqref{eqn:k.alt} incurs incomplete gamma functions with negative arguments. Such functions are not standard. We therefore do not consider this case. Note, however, that although we do not have an explicit expression for $k$, the results in the preceding section still guarantee that $k$ is a valid kernel.

%%%%%%%%%%%%%%%%%%%%%%%%%%%%%%%%%%%%%%%%%%%%%%%%%%%%%%%%%%%%
\subsection{Constructed from Exponential Distribution}\label{sec:dist.exp}
If $X$ is a random variable of the exponential distribution Exp$(\theta)$ with scale $\theta>0$, that is,
\[
f(x)=\frac{1}{\theta}e^{-x/\theta},
\]
then it also belongs to the gamma distribution with shape $s=1$ and scale $\theta$. Hence, the corresponding kernel $k$ and its Fourier transform are given in~\eqref{eqn:k.gamma.1} and~\eqref{eqn:ft.gamma.1} of Section~\ref{sec:dist.gamma}, respectively.

%%%%%%%%%%%%%%%%%%%%%%%%%%%%%%%%%%%%%%%%%%%%%%%%%%%%%%%%%%%%
\subsection{Constructed from Chi-Square Distribution}\label{sec:dist.chi2}
If $X$ is a random variable of the chi-square distribution $\chi_{\nu}^2$ with degree of freedom $\nu$, that is,
\[
f(x)=\frac{x^{\nu/2-1}e^{-x/2}}{2^{\nu/2}\Gamma(\nu/2)},
\]
then it also belongs to the gamma distribution with shape $s=\nu/2$ and scale $\theta=2$. In particular, when $\nu=2$, the corresponding kernel $k$ and its Fourier transform are given in~\eqref{eqn:k.gamma.1} and~\eqref{eqn:ft.gamma.1} of Section~\ref{sec:dist.gamma}, respectively. When $\nu>2$, the respective formulas are give in~\eqref{eqn:k.gamma.s} and~\eqref{eqn:ft.gamma.s}.

%%%%%%%%%%%%%%%%%%%%%%%%%%%%%%%%%%%%%%%%%%%%%%%%%%%%%%%%%%%%
\subsection{Constructed from Chi Distribution}\label{sec:dist.chi}
If $X$ is a random variable of the chi distribution $\chi_{\nu}$ with degree of freedom $\nu$, we have the following known facts:
\begin{enumerate}
\item pdf $\displaystyle f(x)=\frac{2^{1-\nu/2}}{\Gamma(\nu/2)}x^{\nu-1}e^{-x^2/2}$,
\item cdf $\displaystyle F_X(x)=1-\frac{\Gamma(\nu/2,x^2/2)}{\Gamma(\nu/2)}$,
\item mean $\displaystyle E[X]=\sqrt{2}\frac{\Gamma((\nu+1)/2)}{\Gamma(\nu/2)}$,
\item cf $\displaystyle \varphi_X(t)=M\left(\frac{\nu}{2},\frac{1}{2},\frac{-t^2}{2}\right)+
\im t\sqrt{2}\frac{\Gamma((\nu+1)/2)}{\Gamma(\nu/2)}
M\left(\frac{\nu+1}{2},\frac{3}{2},\frac{-t^2}{2}\right)$,
\end{enumerate}
where $M(a,b,z)$ is Kummer's confluent hypergeometric function.

We discuss two cases of $\nu$. When $\nu>1$, we write
\[
\frac{f(x)}{x}=\frac{\Gamma((\nu-1)/2)}{\sqrt{2}\Gamma(\nu/2)}
\frac{2^{1-(\nu-1)/2}}{\Gamma((\nu-1)/2)}x^{\nu-2}e^{-x^2/2}.
\]
Then, clearly, with respect to~\eqref{eqn:C2},
\[
C=\frac{\Gamma((\nu-1)/2)}{\sqrt{2}\Gamma(\nu/2)}
\quad\text{and}\quad
\widetilde{X}\sim\chi_{\nu-1}.
\]
Applying Theorem~\ref{thm:special.case}, we immediately obtain the kernel and the Fourier transform explicitly:
\begin{gather}
k(r)=\frac{\Gamma(\nu/2,r^2/2)-r/\sqrt{2}\cdot\Gamma((\nu-1)/2,r^2/2)}{\Gamma(\nu/2)}, \quad r\ge0, \label{eqn:k.chi.nu} \\
\mathcal{F}[k](t)=\frac{\sqrt{2}\Gamma((\nu-1)/2)}{t^2\Gamma(\nu/2)}
\left[1-M\left(\frac{\nu-1}{2},\frac{1}{2},\frac{-t^2}{2}\right)\right] \label{eqn:ft.chi.nu}.
\end{gather}

Note that as a special case, when $\nu=2$, the distribution $\chi_{\nu}$ is the same as the Rayleigh distribution with scale $\sigma=1$; see Section~\ref{sec:dist.rayleigh}. The explicit expressions for the kernel and the Fourier transform will be presented therein for a general scale parameter $\sigma$.

When $\nu=1$, the distribution $\chi_{\nu}$ is the same as the half-normal distribution with scale $\sigma=1$; see Section~\ref{sec:dist.half.normal}. The explicit expressions will be presented therein for a general $\sigma$.

%%%%%%%%%%%%%%%%%%%%%%%%%%%%%%%%%%%%%%%%%%%%%%%%%%%%%%%%%%%%
\subsection{Constructed from Half-Normal Distribution}\label{sec:dist.half.normal}
If $X$ is a random variable of the half-normal distribution HN$(\sigma)$ with scale $\sigma>0$, we have the following known facts:
\begin{enumerate}
\item pdf $\displaystyle f(x)=\frac{\sqrt{2}}{\sigma\sqrt{\pi}}\exp\left(-\frac{x^2}{2\sigma^2}\right)$,
\item cdf $\displaystyle F_X(x)=\erf\left(\frac{x}{\sigma\sqrt{2}}\right)$,
\item mean $\displaystyle E[X]=\frac{\sigma\sqrt{2}}{\sqrt{\pi}}$,
\item cf $\varphi_X(t)=e^{-\sigma^2t^2/2}[1-\im\erfi(\sigma t/\sqrt{2})]$,
\end{enumerate}
where $\erf$ is the error function and $\erfi$ is the imaginary error function (which, in fact, is a real-valued function when the argument is real).

We may not apply Theorem~\ref{thm:special.case} to derive the explicit formula for $k$, because $C$ is infinite. However, with a change of variable $y=x^2/(2\sigma^2)$, we see that a part of~\eqref{eqn:k.alt} is evaluated to
\[
\int_r^{\infty}\frac{f(x)}{x}\,dx
=\int_r^{\infty}\frac{\sqrt{2}}{x\sigma\sqrt{\pi}}\exp\left(-\frac{x^2}{2\sigma^2}\right)\,dx
=\frac{1}{\sigma\sqrt{2\pi}}\int_{r^2/(2\sigma^2)}^{\infty}\frac{e^{-y}}{y}\,dy
=\frac{E_1(r^2/(2\sigma^2))}{\sigma\sqrt{2\pi}}.
\]
Therefore, an explicit expression for the kernel is
\begin{equation}\label{eqn:k.half.normal}
k(r)=\erfc\left(\frac{r}{\sigma\sqrt{2}}\right)-\frac{1}{\sqrt{\pi}}
\left(\frac{r}{\sigma\sqrt{2}}\right)
E_1\left(\frac{r^2}{2\sigma^2}\right), \quad r\ge0,
\end{equation}
where $\erfc$ is the complementary error function. The following theorem gives the Fourier transform of $k$ in the form of a sine transform, which unfortunately is hard to be further simplified.

\begin{theorem}
For $k$ defined in~\eqref{eqn:k.half.normal},
\begin{equation}\label{eqn:ft.half.normal}
\mathcal{F}[k](t)
=\frac{2}{t\sqrt{\pi}}\int_0^{\infty}E_1(r^2)\sin(\sigma\sqrt{2}tr)\,dr.
\end{equation}
\end{theorem}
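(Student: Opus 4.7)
The plan is to apply the Fourier-transform representation~\eqref{eqn:ft} of $\mathcal{F}[k]$ in terms of the underlying pdf, rather than to attack the explicit formula~\eqref{eqn:k.half.normal} head-on. Using $x\sinc^2(xt/2) = 2(1-\cos(xt))/(xt^2)$, equation~\eqref{eqn:ft} can be rewritten as
\[
\mathcal{F}[k](t) = \frac{2}{t^2}\int_0^\infty \frac{1-\cos(xt)}{x}\,f(x)\,dx,
\]
where $f$ is the half-normal pdf. Substituting $f(x) = \frac{\sqrt{2}}{\sigma\sqrt{\pi}}e^{-x^2/(2\sigma^2)}$ and changing variable by $u = x/(\sigma\sqrt{2})$ (so the exponent becomes $-u^2$), I would set $\alpha := \sigma\sqrt{2}\,t$ and reduce the task to evaluating
\[
I(\alpha) := \int_0^\infty \frac{(1-\cos(\alpha u))\,e^{-u^2}}{u}\,du,
\]
from which $\mathcal{F}[k](t) = \frac{2\sqrt{2}}{t^2\sigma\sqrt{\pi}}\,I(\alpha)$.

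Next, I would integrate by parts in $I(\alpha)$ with $p = 1-\cos(\alpha u)$ and $dq = e^{-u^2}\,du/u$. The key observation is that $\int e^{-u^2}u^{-1}\,du = -\tfrac{1}{2}E_1(u^2) + \text{const.}$ (verified by the substitution $v = u^2$ together with $E_1'(v) = -e^{-v}/v$). Choosing the antiderivative $q(u) = -\tfrac{1}{2}E_1(u^2)$, which vanishes as $u\to\infty$, produces
\[
I(\alpha) = \Big[-\tfrac{1}{2}(1-\cos(\alpha u))E_1(u^2)\Big]_0^\infty + \frac{\alpha}{2}\int_0^\infty E_1(u^2)\sin(\alpha u)\,du.
\]
I then need to verify the boundary terms. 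As $u\to\infty$, $E_1(u^2)\sim e^{-u^2}/u^2 \to 0$ while $1-\cos(\alpha u)$ is bounded. As $u\to 0^+$, $E_1(u^2) \sim -2\log u$ (logarithmic singularity) but $1-\cos(\alpha u) \sim \alpha^2 u^2/2$, so the product is $O(u^2 \log u) \to 0$. Hence both boundary contributions vanish.

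Combining the two displays gives $I(\alpha) = (\alpha/2)\int_0^\infty E_1(u^2)\sin(\alpha u)\,du$, and feeding this back into $\mathcal{F}[k](t) = \frac{2\sqrt{2}}{t^2 \sigma\sqrt{\pi}}I(\alpha)$, with $\alpha = \sigma\sqrt{2}\,t$, the factor of $\sigma\sqrt{2}$ from $\alpha$ cancels against the prefactor and reduces the rational part to $2/(t\sqrt{\pi})$, yielding precisely~\eqref{eqn:ft.half.normal} after renaming the integration variable $u \to r$. The main technical care is in the two endpoint checks of the integration by parts; the rest is bookkeeping of constants. Everything else---Fubini to apply~\eqref{eqn:ft}, differentiability, absolute integrability of $k$---is automatic since the half-normal pdf decays faster than any polynomial and $C$ is infinite only in the sense that it forbids the shortcut of Theorem~\ref{thm:special.case}, not in any way that affects the validity of~\eqref{eqn:ft}.
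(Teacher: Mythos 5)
Your proof is correct, and the constants work out: with $\alpha=\sigma\sqrt{2}\,t$ you get $\mathcal{F}[k](t)=\frac{2\sqrt{2}}{t^2\sigma\sqrt{\pi}}\cdot\frac{\alpha}{2}\int_0^\infty E_1(u^2)\sin(\alpha u)\,du=\frac{2}{t\sqrt{\pi}}\int_0^\infty E_1(u^2)\sin(\sigma\sqrt{2}tu)\,du$, and both endpoint checks in the integration by parts are handled properly (the $O(u^2\log u)$ estimate at the origin is the one that matters). However, your route is genuinely different from the paper's. The paper attacks the explicit formula~\eqref{eqn:k.half.normal} head-on: it writes $\mathcal{F}[k](t)=2\int_0^\infty k(r)\cos(rt)\,dr$, splits into the $\erfc$ piece (evaluated by swapping the order of integration into a Gaussian sine integral) and the $rE_1(r^2)$ piece (integrated by parts, which produces the $E_1$ sine transform plus another Gaussian sine integral), and relies on the two Gaussian sine integrals cancelling exactly. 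You instead start from the spectral identity~\eqref{eqn:ft}, which expresses $\mathcal{F}[k]$ directly as $\frac{2}{t^2}\int_0^\infty\frac{1-\cos(xt)}{x}f(x)\,dx$, and a single integration by parts introduces $E_1$; you never touch $\erfc$ or the closed form of $k$ at all. Your approach is shorter and avoids the cancellation, at the cost of needing to certify that~\eqref{eqn:ft} applies --- which it does, since $k\ge0$ and $\int k=E[X]=\sigma\sqrt{2}/\sqrt{\pi}<\infty$ by Theorem~\ref{thm:area.under.curve} (a slightly cleaner justification than your appeal to the decay of the pdf), and the interchange of integrals behind~\eqref{eqn:ft} is Tonelli on a nonnegative integrand. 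The paper's computation has the side benefit of serving as a consistency check on the explicit expression~\eqref{eqn:k.half.normal} itself, which yours bypasses.
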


\begin{proof}
We first turn $k(r)$ to $k(\sigma\sqrt{2}r)$ in order to simply the math:
\begin{align}
\int_{-\infty}^{\infty}k(r)e^{\im rt}\,dr
&=2\int_0^{\infty}k(r)\cos(rt)\,dr
=2\sqrt{2}\sigma\int_0^{\infty}k(\sigma\sqrt{2}r)\cos(\sigma\sqrt{2}rt)\,dr \nonumber\\
&=2\sqrt{2}\sigma\left[\int_0^{\infty}\erfc(r)\cos(rT)\,dr
-\frac{1}{\sqrt{\pi}}\int_0^{\infty}rE_1(r^2)\cos(rT)\,dr\right], \label{eqn:t1}
\end{align}
where $T=\sigma\sqrt{2}t$.
For the first term, we rearrange the order of integration:
\begin{align}
\int_0^{\infty}\erfc(r)\cos(rT)\,dr
&=\int_0^{\infty}\frac{2}{\sqrt{\pi}}\int_r^{\infty}e^{-x^2}dx\cos(rT)\,dr \nonumber\\
&=\frac{2}{\sqrt{\pi}}\int_0^{\infty}e^{-x^2}\int_0^x\cos(rT)\,drdx
=\frac{2}{T\sqrt{\pi}}\int_0^{\infty}e^{-x^2}\sin(xT)\,dx. \label{eqn:t2}
\end{align}
For the second term, we perform integration by parts:
\begin{align}
\int_0^{\infty}rE_1(r^2)\cos(rT)\,dr
&=\left.\frac{rE_1(r^2)\sin(rT)}{T}\right\vert_0^{\infty}
-\int_0^{\infty}\frac{\sin(rT)}{T}\left[E_1(r^2)-2e^{-r^2}\right]\,dr \nonumber\\
&=-\frac{1}{T}\int_0^{\infty}\sin(rT)E_1(r^2)\,dr
+\frac{2}{T}\int_0^{\infty}\sin(rT)e^{-r^2}\,dr. \label{eqn:t3}
\end{align}
Substituting~\eqref{eqn:t2} and~\eqref{eqn:t3} into~\eqref{eqn:t1}, we obtain the result of the theorem.
\end{proof}

%%%%%%%%%%%%%%%%%%%%%%%%%%%%%%%%%%%%%%%%%%%%%%%%%%%%%%%%%%%%
\subsection{Constructed from Rayleigh Distribution}\label{sec:dist.rayleigh}
If $X$ is a random variable of the Rayleigh distribution Rayleigh$(\sigma)$ with scale $\sigma>0$, we have the following known facts:
\begin{enumerate}
\item pdf $\displaystyle f(x)=\frac{x}{\sigma^2}e^{-x^2/(2\sigma^2)}$,
\item cdf $F_X(x)=1-e^{-x^2/(2\sigma^2)}$,
\item mean $\displaystyle E[X]=\sigma\sqrt{\frac{\pi}{2}}$.
\end{enumerate}

To derive explicit expressions, we note that
\[
\frac{f(x)}{x}=\frac{\sqrt{\pi}}{\sigma\sqrt{2}}\frac{\sqrt{2}}{\sigma\sqrt{\pi}}e^{-x^2/(2\sigma^2)}.
\]
Then, clearly, with respect to~\eqref{eqn:C2},
\[
C=\frac{1}{\sigma}\sqrt{\frac{\pi}{2}}
\quad\text{and}\quad
\widetilde{X}\sim \text{HN}(\sigma).
\]
Applying Theorem~\ref{thm:special.case} with the known facts for the half-normal distribution listed in Section~\ref{sec:dist.half.normal}, we immediately obtain the kernel and the Fourier transform explicitly:
\begin{gather}
k(r)=\exp\left(-\frac{r^2}{2\sigma^2}\right)
-\sqrt{\pi}\left(\frac{r}{\sigma\sqrt{2}}\right)
\erfc\left(\frac{r}{\sigma\sqrt{2}}\right), \quad r\ge0, \label{eqn:k.rayleigh}\\
\mathcal{F}[k](t)=\frac{\sqrt{2\pi}}{\sigma t^2}\left[1-\exp\left(-\frac{\sigma^2t^2}{2}\right)\right]. \label{eqn:ft.rayleigh}
\end{gather}

%%%%%%%%%%%%%%%%%%%%%%%%%%%%%%%%%%%%%%%%%%%%%%%%%%%%%%%%%%%%
\subsection{Constructed from Nakagami Distribution}\label{sec:dist.nakagami}
If $X$ is a random variable of the Nakagami distribution Nakagami$(m,\Omega)$ with shape $m\ge1/2$ and spread $\Omega>0$, that is,
\[
f(x)=\frac{2m^mx^{2m-1}e^{-mx^2/\Omega}}{\Gamma(m)\Omega^m},
\]
we may perform a reparameterization
\[
m=\nu/2,\quad \Omega = \nu\theta^2,
\]
and obtain
\[
f(x)=\frac{1}{\theta}\cdot\frac{2^{1-\nu/2}}{\Gamma(\nu/2)}(x/\theta)^{\nu-1}e^{-(x/\theta)^2/2}.
\]
Clearly, $f$ is a rescaling of the pdf of the chi distribution $\chi_{\nu}$, with the integer $\nu$ (degree of freedom) relaxed to a real number.

We discuss two cases of $m$. When $m>1/2$ (i.e., $\nu>1$), we will reuse the formulas~\eqref{eqn:k.chi.nu} and~\eqref{eqn:ft.chi.nu} derived for $\chi_{\nu}$. The reason why~\eqref{eqn:k.chi.nu} and~\eqref{eqn:ft.chi.nu} are valid for non-integers $\nu$ is that they are derived from the cdf and the cf of $\chi_{\nu}$, wherein the integration results are valid for any real numbers $\nu>1$. Then, with a proper scaling, we have for the Nakagami distribution:
\begin{gather}
k(r)=\frac{\Gamma(m,mr^2/\Omega)-\sqrt{m}r/\sqrt{\Omega}\cdot\Gamma(m-1/2,mr^2/\Omega)}{\Gamma(m)}, \quad r\ge0, \label{eqn:k.nakagami.m}\\
\mathcal{F}[k](t)=2\sqrt{\frac{m}{\Omega}}\frac{\Gamma(m-1/2)}{t^2\Gamma(m)}
\left[1-M\left(m-\frac{1}{2},\frac{1}{2},\frac{-\Omega t^2}{4m}\right)\right]. \label{eqn:ft.nakagami.m}
\end{gather}

When $m=1/2$, the distribution is the same as the half-normal distribution with scale $\sigma=\sqrt{\Omega}$. Then, substituting  $\sigma=\sqrt{\Omega}$ into~\eqref{eqn:k.half.normal} and~\eqref{eqn:ft.half.normal}, we have
\begin{gather}
k(r)=\erfc\left(\frac{r}{\sqrt{2\Omega}}\right)-\frac{1}{\sqrt{\pi}}
\left(\frac{r}{\sqrt{2\Omega}}\right)
E_1\left(\frac{r^2}{2\Omega}\right), \quad r\ge0, \label{eqn:k.nakagami.1}\\
\mathcal{F}[k](t)=\frac{2}{t\sqrt{\pi}}\int_0^{\infty}E_1(r^2)\sin(\sqrt{2\Omega}tr)\,dr. \label{eqn:ft.nakagami.1}
\end{gather}

%%%%%%%%%%%%%%%%%%%%%%%%%%%%%%%%%%%%%%%%%%%%%%%%%%%%%%%%%%%%
\subsection{Constructed from Weibull Distribution}\label{sec:dist.weibull}
If $X$ is a random variable of the Weibull distribution Weibull$(\theta,\alpha)$ with scale $\theta>0$ and shape $\alpha>0$, we have the following known facts:
\begin{enumerate}
\item pdf $\displaystyle f(x)=\frac{\alpha}{\theta}\left(\frac{x}{\theta}\right)^{\alpha-1}e^{-(x/\theta)^{\alpha}}$,
\item cdf $F_X(x)=1-e^{-(x/\theta)^{\alpha}}$,
\item mean $E[X]=\theta\Gamma(1+1/\alpha)$.
\end{enumerate}

We discuss two cases of $\alpha$. When $\alpha>1$, with a change of variable $y=(x/\theta)^{\alpha}$, we see that a part of~\eqref{eqn:k.alt} is evaluated to
\[
\int_r^{\infty}\frac{f(x)}{x}\,dx=
\int_r^{\infty}\frac{\alpha}{\theta}\left(\frac{x}{\theta}\right)^{\alpha-1}e^{-(x/\theta)^{\alpha}}\frac{1}{x}\,dx=
\frac{1}{\theta}\int_{(r/\theta)^{\alpha}}^{\infty}y^{-1/\alpha}e^{-y}\,dy=
\frac{1}{\theta}\Gamma(1-1/\alpha,(r/\theta)^{\alpha}).
\]
Therefore, an explicit expression for the kernel is
\begin{equation}\label{eqn:k.weibull.alpha}
k(r)=e^{-(r/\theta)^{\alpha}}-(r/\theta)\Gamma(1-1/\alpha,(r/\theta)^{\alpha}).
\end{equation}
We do not have an explicit expression for the Fourier transform, unfortunately.

When $\alpha=1$, the distribution is the same as the exponential distribution with scale $\theta$; it is also the same as the gamma distribution with shape $s=1$ and scale $\theta$. Hence, the corresponding kernel $k$ and its Fourier transform are given in~\eqref{eqn:k.gamma.1} and~\eqref{eqn:ft.gamma.1} of Section~\ref{sec:dist.gamma}, respectively.

%%%%%%%%%%%%%%%%%%%%%%%%%%%%%%%%%%%%%%%%%%%%%%%%%%%%%%%%%%%%
\subsection{Summary}
We summarize the results obtained so far in Table~\ref{tab:dist.kernel} (located after the bibliography). This table lists many applicable distributions and the correspondingly constructed kernels. Accompanied with the distributions are the pmf/pdf's and the mean's. The pmf/pdf's are used to uniquely identify the distributions, because different authors may call the parameters differently. Moreover, for the Poisson distribution, it has been shifted to avoid a nonzero mass at the origin. Hence, one is suggested to fully digest the notations before usage. The mean's are used to standardize a kernel so that the area under curve is $1$ (see Section~\ref{sec:scaling}). Accompanied with the kernels are the explicit expressions for $k$ and the Fourier transform $\mathcal{F}[k]$. These expressions could be used, for example, for further deriving analytic properties.

The table consists of three parts. The top part contains a discrete distribution, whereas the other two parts contain continuous ones. The distributions in the bottom part are special cases of those in the middle. The equivalence is indicated in the last column. Therefore, we consider that practical use of the distributions focuses mainly on the top and middle parts of the table.

A practical aspect for the use of the distributions is the choice of parameters, which is reflected in the last column. All continuous distributions therein contain a ``scale'' parameter that affects the spread in one way or another. Because we use the distribution mean $A=E[X]$ to perform standardization, we may fix the scale parameter at an arbitrary value (particularly, $1$) and let the actual spread be determined by a scaling factor $\rho=A/\tau$ where $\tau$ is tuned (see Section~\ref{sec:scaling}). Apart from the scale parameter, some distributions come additionally with a ``shape'' parameter, which appears as an exponent for $x$ in the pdf. When tuning such a parameter, one may search for an optimal one from a grid (e.g., integers and half-integers). The same practice applies to the ``rate'' parameter of Poisson.

Figures~\ref{fig:kernel.var} and~\ref{fig:kernel.var2} (located after the bibliography) plot the kernels listed on the top and the middle parts of Table~\ref{tab:dist.kernel}, with several choices of a parameter ($\mu$ in Poisson, $s$ in gamma, $m$ in Nakagami, and $\alpha$ in Weibull). As expected, the kernels are all convex and monotonically decreasing from $1$ to $0$. The right column of the figure shows the kernels scaled by the distribution mean; therefore, the area under curve is $1$. These curves smoothly vary with the parameter.

%%%%%%%%%%%%%%%%%%%%%%%%%%%%%%%%%%%%%%%%%%%%%%%%%%%%%%%%%%%%
\section{Random Feature Maps}\label{sec:var}
Mercer's theorem~\cite{Mercer1909} guarantees that there exists a feature map $z(x)$ such that $k(x-x')$ is equal to the inner product $\langle z(x),z(x')\rangle$, where $z$ is a finite-dimensional or countably infinite-dimensional vector. The random feature approaches construct such maps so that $z(x)$ is random and that the expectation of $\langle z(x),z(x')\rangle$ is equal to $k(x-x')$. Naturally, one may define $D$ independent copies of $z$, namely, $z^{(l)}$ for $l=1,\ldots,D$, and use the Monte Carlo sample average
$\frac{1}{D}\sum_{l=1}^D\langle z^{(l)}(x),z^{(l)}(x')\rangle$
to reduce the randomness of the inner product as an unbiased approximation to the kernel $k$. In this section, we compare the randomness of different approaches.

On notation: The data $x$ in the general case is a vector; however in some cases (e.g., random binning), the kernel acts on a scalar input $x$. The feature map $z$ may be a scalar-valued or a vector-valued map, depending on context. We define a random function $\tilde{k}$ as a shorthand notation of the inner product:
\[
\tilde{k}(x,x')\equiv\langle z(x),z(x')\rangle
\]
and write $\widetilde{K}$ as the corresponding kernel matrix. Note that although $k$ is stationary, $\tilde{k}$ may not (hence the notations are $k(x-x')$ and $\tilde{k}(x,x')$, respectively). Then, with $D$ independent copies of the feature maps, the corresponding kernel matrix becomes $\frac{1}{D}\sum_{l=1}^D\widetilde{K}^{(l)}$. We are interested in the probabilistic properties of $\frac{1}{D}\sum_{l=1}^D\widetilde{K}^{(l)}-K$. Because the dimension of the data matters only in the Fourier transform of the kernel, the mathematical derivation here focuses on one-dimensional kernel functions. Generalizations to the multidimensional case are straightforward. The theorems in this section are presented to be applicable to the multidimensional case, too.

%%%%%%%%%%%%%%%%%%%%%%%%%%%%%%%%%%%%%%%%%%%%%%%%%%%%%%%%%%%%
\subsection{Random Fourier Map}\label{sec:rff}
The random Fourier approach defines the feature map $z(x)=e^{\im wx}$, where $w$ is drawn from the cdf $F$ in~\eqref{eqn:rff}. Then, the same equation immediately verifies that the inner product $\langle z(x),z(x')\rangle=e^{\im w(x-x')}$ has an expectation $k(x-x')$. Additionally, we easily obtain that the variance of the inner product is
\begin{equation}\label{eqn:fourier.var}
\var[\langle z(x),z(x')\rangle]
=\left[\int |e^{\im w(x-x')}|^2\,dF(w)\right]
-k(x-x')^2
=1-k(x-x')^2.
\end{equation}

In practice, it is often more desirable to use a feature map that is real-valued. Hence, the real version of the map is $z(x)=\sqrt{2}\cos(wx+b)$, where $b$ is drawn from $\mathcal{U}(0,2\pi)$. This map still yields expectation $k(x-x')$ for the inner product:
\begin{align*}
E[\langle z(x),z(x')\rangle]
&=\int_{-\infty}^{\infty}\int_0^{2\pi}
\Big(2\cos(wx+b)\cos(wx'+b)\Big)\frac{1}{2\pi}\,db\,dF(w)\\
&=\int_{-\infty}^{\infty}\cos(w(x-x'))\,dF(w)
=k(x-x'),
\end{align*}
but gives a larger variance:
\begin{align}
\var[\langle z(x),z(x')\rangle]
&=\left[\int_{-\infty}^{\infty}\int_0^{2\pi}
\Big(2\cos(wx+b)\cos(wx'+b)\Big)^2\frac{1}{2\pi}\,db\,dF(w)\right]
-k(x-x')^2 \nonumber \\
&=\left[\int_{-\infty}^{\infty}
\Big(1+\frac{1}{2}\cos(2w(x-x'))\Big)\,dF(w)\right]
-k(x-x')^2 \nonumber\\
&=1+\frac{1}{2}k(2(x-x'))-k(x-x')^2.\label{eqn:fourier.var2}
\end{align}

The feature map is straightforwardly generalized to the multidimensional case through multidimensional Fourier transform, the details of which are omitted here. With one further generalization---using a Monte Carlo sample average of $D$ independent copies to replace $z$---we arrive at the following result. It states that the random Fourier approach gives an unbiased approximation. It also gives the squared Frobenius norm error of the approximation.

\begin{theorem}\label{thm:rff}
Let $K$ be the kernel matrix of a kernel $k$ on data points $x_i$, $i=1,\ldots,n$. Let $\widetilde{K}^{(l)}$, $l=1,\ldots,D$ be the kernel matrices resulting from $D$ independent random Fourier feature maps for $k$. We have
\[
E\left[\frac{1}{D}\sum_{l=1}^D\widetilde{K}^{(l)}\right]=K.
\]
Moreover, for the complex feature map,
\[
E\left[\Bigg\|\frac{1}{D}\sum_{l=1}^D\widetilde{K}^{(l)}-K\Bigg\|_F^2\right]=\frac{1}{D}(n^2-\|K\|_F^2),
\]
and for the real feature map,
\[
E\left[\Bigg\|\frac{1}{D}\sum_{l=1}^D\widetilde{K}^{(l)}-K\Bigg\|_F^2\right]=\frac{1}{D}\left(n^2+\frac{1}{2}\sum_{i,j=1}^nk(2(x_i-x_j))-\|K\|_F^2\right).
\]
\end{theorem}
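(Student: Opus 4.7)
The plan is to reduce the theorem to the per-entry moment computations already carried out in equations~\eqref{eqn:fourier.var} and~\eqref{eqn:fourier.var2}, and then assemble them via linearity of expectation and independence across the $D$ Monte Carlo copies. The multidimensional generalization changes only the form of $\bm{w}\cdot x$ versus $wx$, so the same arithmetic applies.

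First, I would dispatch the unbiasedness. The calculation in Section~\ref{sec:rff} showed $E[\tilde{k}(x,x')]=k(x-x')$ for both the complex and real feature maps. Applying this entrywise, $E[\widetilde{K}^{(l)}_{ij}] = k(x_i-x_j) = K_{ij}$ for every $l$, so by linearity of expectation $E[\tfrac{1}{D}\sum_l \widetilde{K}^{(l)}] = K$.

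Next, for the squared Frobenius norm error, I would write
\[
E\!\left[\Bigl\|\tfrac{1}{D}\sum_{l=1}^D \widetilde{K}^{(l)} - K\Bigr\|_F^2\right]
= \sum_{i,j=1}^n E\!\left[\Bigl|\tfrac{1}{D}\sum_{l=1}^D \widetilde{K}^{(l)}_{ij} - K_{ij}\Bigr|^2\right]
= \sum_{i,j=1}^n \var\!\left[\tfrac{1}{D}\sum_{l=1}^D \widetilde{K}^{(l)}_{ij}\right],
\]
using unbiasedness in the last step. For each fixed pair $(i,j)$, the variables $\{\widetilde{K}^{(l)}_{ij}\}_{l=1}^D$ are i.i.d.\ because the $D$ feature maps use independent draws of $w$ (and $b$, in the real case). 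Independence across $l$ therefore gives $\var[\tfrac{1}{D}\sum_l \widetilde{K}^{(l)}_{ij}] = \tfrac{1}{D}\var[\widetilde{K}^{(1)}_{ij}]$. Note I only need independence across the index $l$; the entries for different $(i,j)$ within a single map are correlated, but that is irrelevant here because the Frobenius norm decomposes as a sum of squared entries.

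Finally, I would substitute the known single-entry variances. For the complex map, equation~\eqref{eqn:fourier.var} gives $\var[\widetilde{K}^{(1)}_{ij}] = 1 - k(x_i-x_j)^2$, so summing over $i,j$ yields $n^2 - \|K\|_F^2$ and dividing by $D$ produces the first formula. For the real map, equation~\eqref{eqn:fourier.var2} gives $\var[\widetilde{K}^{(1)}_{ij}] = 1 + \tfrac{1}{2}k(2(x_i-x_j)) - k(x_i-x_j)^2$; summing over $i,j$ gives the second formula. Honestly there is no genuine obstacle here: the only thing to watch is the convention that $\|M\|_F^2 = \sum_{i,j}|M_{ij}|^2$ for complex matrices so that the variance identity applies cleanly, and the fact that the cross-entry correlations within a single copy need not be addressed.
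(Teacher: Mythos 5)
Your proposal is correct and follows essentially the same route as the paper's own proof: decompose the squared Frobenius norm entrywise, recognize each term as the variance of the Monte Carlo average, use independence across the $D$ copies to get the $1/D$ factor, and substitute the single-sample variances from~\eqref{eqn:fourier.var} and~\eqref{eqn:fourier.var2}. Your explicit remark that cross-entry correlations within a single copy are irrelevant is a helpful clarification the paper leaves implicit, but the argument is the same.
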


\begin{proof}
The first expectation is obvious and the second one is analogous to the third one. Thus, we prove only the second one. By the linearity of expectation, we have
\[
E\left[\Bigg\|\frac{1}{D}\sum_{l=1}^D\widetilde{K}^{(l)}-K\Bigg\|_F^2\right]
=\sum_{i,j=1}^nE\left[\Bigg(\frac{1}{D}\sum_{l=1}^D\widetilde{K}^{(l)}_{ij}-K_{ij}\Bigg)^2\right].
\]
Note that inside the summation, each expectation is nothing but the variance of
$\frac{1}{D}\sum_{l=1}^D\widetilde{K}^{(l)}_{ij}$. Then, with independence,
\[
E\left[\Bigg(\frac{1}{D}\sum_{l=1}^D\widetilde{K}^{(l)}_{ij}-K_{ij}\Bigg)^2\right]
=\var\left[\frac{1}{D}\sum_{l=1}^D\widetilde{K}^{(l)}_{ij}\right]
=\frac{1}{D}\var[\widetilde{K}^{(1)}_{ij}].
\]
By~\eqref{eqn:fourier.var}, we see that
$
\var[\widetilde{K}^{(1)}_{ij}]=1-k(x_i-x_j)^2,
$
which proves the second expectation in the theorem.

For the third expectation, follow the same argument and apply~\eqref{eqn:fourier.var2} at the end.
\end{proof}

%%%%%%%%%%%%%%%%%%%%%%%%%%%%%%%%%%%%%%%%%%%%%%%%%%%%%%%%%%%%
\subsection{Random Binning Map}\label{sec:rbf}
The random binning approach applies to multidimensional kernel functions $k$ that are a tensor product of one-dimensional kernels. The approach was originally proposed for only the exponential kernel, because based on~\eqref{eqn:rbf}, the term $wk''(w)$ happens to be a known pdf (gamma distribution of a certain shape). One easily generalizes the approach based on, instead, \eqref{eqn:k}, through a reverse thinking: any cdf corresponds to a valid kernel. Hence, in the general setting, we consider the following construction, which defines a marginal distribution for the inner product $\tilde{k}=\langle z(x),z(x')\rangle$:
\begin{enumerate}
\item Let $F(w)$ be a cdf with positive support.
\item Let a random one-dimensional grid have spacing $w$ and offset $b$, where $w\sim F(w)$ and $b\sim\mathcal{U}(0,w)$. In other words, we have the conditional probability density $f(b|w)=w^{-1}$.
\item Define the feature vector $z(x)$, one element for each grid bin, that takes $1$ when $x$ falls in the bin and $0$ otherwise. For two points $x$ and $x'$, because the probability that they fall in the same bin is $\max\{0,1-r/w\}$ with $r=|x-x'|$, we have the conditional probability
\[
\Pr(\tilde{k}=1\mid w,b)=\max\left\{0,1-\frac{r}{w}\right\},\qquad
\Pr(\tilde{k}=0\mid w,b)=1-\Pr(\tilde{k}=1\mid w,b).
\]
\end{enumerate}
Therefore, this procedure defines a marginal distribution for $\tilde{k}$ whose pmf is
\[
\Pr(\tilde{k}=1)=\int_0^{\infty}\int_0^w\Pr(\tilde{k}=1\mid w,b)f(b|w)\,db\,dF(w)
=k(r),
\qquad\text{(cf.~\eqref{eqn:k})}
\]
and $\Pr(\tilde{k}=1)=1-\Pr(\tilde{k}=0)$. In other words, $\tilde{k}$ is a Bernoulli variable with success probability $k$; hence, obviously,
\begin{equation}\label{eqn:binning.var}
E[\tilde{k}]=k \quad\text{and}\quad \var[\tilde{k}]=k-k^2.
\end{equation}

This feature map is straightforwardly generalized to the multidimensional case through using a multidimensional grid. With one further generalization---using a Monte Carlo sample average of $D$ independent copies to replace $z$---we arrive at the following result, parallel to Theorem~\ref{thm:rff}.

\begin{theorem}\label{thm:rbf}
Let $K$ be the kernel matrix of a kernel $k$ and let $\widetilde{K}^{(l)}$, $l=1,\ldots,D$ be the kernel matrices resulting from $D$ independent random binning feature maps for $k$. We have
\[
E\left[\frac{1}{D}\sum_{l=1}^D\widetilde{K}^{(l)}\right]=K,
\]
and
\[
E\left[\Bigg\|\frac{1}{D}\sum_{l=1}^D\widetilde{K}^{(l)}-K\Bigg\|_F^2\right]=\frac{1}{D}\left(\sum_{i,j=1}^nK_{ij}-\|K\|_F^2\right).
\]
\end{theorem}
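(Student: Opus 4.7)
The plan is to mirror the proof of Theorem~\ref{thm:rff} as closely as possible, exploiting the Bernoulli structure of $\tilde{k}$ established in~\eqref{eqn:binning.var}. The unbiasedness $E[\frac{1}{D}\sum_l \widetilde{K}^{(l)}]=K$ is immediate from linearity of expectation, since each entry satisfies $E[\widetilde{K}^{(l)}_{ij}]=E[\tilde{k}(x_i,x_j)]=k(x_i-x_j)=K_{ij}$ by the marginal computation carried out just before~\eqref{eqn:binning.var}. So the real content is the squared Frobenius norm identity.

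For that, I would expand entrywise by linearity,
\[
E\left[\Bigg\|\frac{1}{D}\sum_{l=1}^D\widetilde{K}^{(l)}-K\Bigg\|_F^2\right]
=\sum_{i,j=1}^{n}E\left[\Bigg(\frac{1}{D}\sum_{l=1}^D\widetilde{K}^{(l)}_{ij}-K_{ij}\Bigg)^{\!2}\right],
\]
and recognize each summand as the variance of $\frac{1}{D}\sum_l \widetilde{K}^{(l)}_{ij}$. Because the $D$ random grids are drawn independently, this variance equals $\frac{1}{D}\var[\widetilde{K}^{(1)}_{ij}]$. By~\eqref{eqn:binning.var}, $\widetilde{K}^{(1)}_{ij}$ is Bernoulli with success probability $k(x_i-x_j)=K_{ij}$, so $\var[\widetilde{K}^{(1)}_{ij}]=K_{ij}-K_{ij}^2$. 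Summing over $i,j$ collapses to $\frac{1}{D}(\sum_{i,j}K_{ij}-\|K\|_F^2)$, which is the claimed identity.

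The only step that merits care is checking that~\eqref{eqn:binning.var}, derived in the excerpt for one-dimensional kernels, carries over to the multidimensional tensor-product setting invoked in the theorem statement. I would handle this by noting that with a product grid and a tensor-product kernel $k(x-x')=\prod_{d}k_d(x_d-x'_d)$, the event ``$x$ and $x'$ fall in the same multidimensional bin'' is the intersection of the per-coordinate events, which are independent conditional on the per-coordinate grid parameters. Hence $\tilde{k}$ in the multidimensional case is still a $\{0,1\}$-valued Bernoulli variable, with success probability $\prod_{d}k_d(x_d-x'_d)=k(x-x')$, so the mean/variance formulas in~\eqref{eqn:binning.var} apply verbatim to $\widetilde{K}^{(1)}_{ij}$. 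This is the only nontrivial item; once it is in hand, the rest is the same entrywise variance bookkeeping as in Theorem~\ref{thm:rff}, now without the extra $n^2$ term that appeared there because $\tilde{k}\in\{0,1\}$ gives $E[\tilde{k}^2]=k$ rather than the unit modulus used for the complex Fourier features.
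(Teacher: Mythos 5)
Your proposal is correct and follows essentially the same route as the paper: the paper proves this result by repeating the entrywise variance bookkeeping of Theorem~\ref{thm:rff} and substituting $\var[\widetilde{K}^{(1)}_{ij}]=k(x_i-x_j)-k(x_i-x_j)^2$ from~\eqref{eqn:binning.var} at the end. Your extra remark on why the Bernoulli structure survives the tensor-product, multidimensional-grid generalization is a welcome elaboration of a point the paper simply declares straightforward.
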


\begin{proof}
The proof is analogous to that of Theorem~\ref{thm:rff}, except that at the end we apply
$
\var[\widetilde{K}^{(1)}_{ij}]=k(x_i-x_j)-k(x_i-x_j)^2.
$
\end{proof}

%%%%%%%%%%%%%%%%%%%%%%%%%%%%%%%%%%%%%%%%%%%%%%%%%%%%%%%%%%%%
\subsection{Discussions}\label{sec:discuss}
Theorems~\ref{thm:rff} and~\ref{thm:rbf} indicate that for a kernel $k$ that admits both random Fourier and random binning feature maps, the latter map results in an approximate kernel matrix closer to $K$ than does the former map, if the same sample size $D$ is used, because $0\le K_{ij}\le1$. Moreover, \eqref{eqn:fourier.var}, \eqref{eqn:fourier.var2}, and~\eqref{eqn:binning.var} reveal that such a better approximation is elementwise. Of course, a better quality in matrix approximation does not necessarily imply a superior performance in a machine learning task, where the performance metric might not be directly connected with matrix approximation. In practice, however, our experience shows that random binning indeed performs better almost always, in the sense that it requires a (much) smaller $D$ for a matching regression error/classification accuracy, compared with random Fourier. See experimental results in the next section.

One advantage of random Fourier, though, is that it generalizes more broadly to multidimensional inputs, through multidimensional Fourier transforms. As long as the respective multivariate probability distribution can be easily sampled from, the random Fourier map is efficient to compute. Such is the case, for example, for the squared exponential kernel (also called the Gaussian kernel), because the corresponding distribution is multivariate normal. As another example, the exponential kernel (note the vector norm)
\[
\exp\left(-\frac{\|x-x'\|_2}{\sigma}\right)
\]
is corresponded by multivariate Cauchy. In fact, both the squared exponential and the exponential kernels are special cases of the Mat\'{e}rn family of kernels~\cite{Stein1999,Rasmussen2006}, whose corresponding distributions are the multivariate t-distributions, when the Mat\'{e}rn smoothness parameter is an integer or a half-integer.

On the other hand, random binning is applicable to only tensor-product kernels; e.g., the Laplace kernel
\[
\exp\left(-\frac{\|x-x'\|_1}{\sigma}\right)
=\exp\left(-\frac{|(x)_1-(x')_1|}{\sigma}\right)
\exp\left(-\frac{|(x)_2-(x')_2|}{\sigma}\right)\cdots
\exp\left(-\frac{|(x)_d-(x')_d|}{\sigma}\right),
\]
where $(\cdot)_i$ is used to index the dimensions, not the data. Such is not the limitation of Polya's criterion, because one may easily generalize~\eqref{eqn:k} to the multidimensional case by using a multidimensional positive-definite function to replace the triangular function in the integrand. However, the challenge is that if the integrand is not a tensor product, it is difficult to define a ``bin'' such that two points fall in the same bin with a probability equal to the integrand.

In the next section, we will perform an experiment that compares also the Gaussian kernel as an example kernel for random Fourier, which, despite the aforementioned advantage, performs less well than random binning.

%%%%%%%%%%%%%%%%%%%%%%%%%%%%%%%%%%%%%%%%%%%%%%%%%%%%%%%%%%%%
\section{Numerical Experiments}\label{sec:exp}
In this section, we demonstrate the empirical performance of random Fourier (denoted by ``RF'') and random binning (denoted by ``RB''), as kernel approximation approaches for regression and classification, in the reproducing kernel Hilbert space (RKHS). We perform the experiments with eight benchmark data sets downloaded from \url{http://www.csie.ntu.edu.tw/~cjlin/libsvm/}. The primary reason of using these data sets is their varying sizes $n$ and dimensions $d$. Some of the data sets come with a train/test split; for those not, we performed a 4:1 split. Attributes were normalized to $[-1,1]$. Table~\ref{tab:dataset} gives the detailed information.

\begin{table}[ht]
\centering
\caption{Data sets.}
\label{tab:dataset}
\begin{tabular}{ccrrrcccrrr}
\\[-5pt]
\hline
Name & Type & $d$ & $n$ Train & $n'$ Test\\
\hline
cadata            & regression            &   8 &    16,512 &     4,128\\
YearPredictionMSD & regression            &  90 &   463,518 &    51,630\\
\hline
ijcnn1            & binary classification &  22 &    35,000 &    91,701\\
covtype.binary    & binary classification &  54 &   464,809 &   116,203\\
SUSY              & binary classification &  18 & 4,000,000 & 1,000,000\\
\hline
mnist             & 10 classes            & 780 &    60,000 &    10,000\\
acoustic          & 3 classes             &  50 &    78,823 &    19,705\\
covtype           & 7 classes             &  54 &   464,809 &   116,203\\
\hline
\end{tabular}
\end{table}

%%%%%%%%%%%%%%%%%%%%%%%%%%%%%%%%%%%%%%%%%%%%%%%%%%%%%%%%%%%%
\subsection{Matrix Approximation}\label{sec:mat.approx}
The purpose of the following experiment is to empirically verify Theorems~\ref{thm:rff} and~\ref{thm:rbf} regarding the kernel matrix approximation error, and show how large the gap could be between the two feature maps. For this, we use the Laplace kernel (tensor product of one-dimensional exponential kernels) as an example, because the two corresponding distributions, Cauchy for the random Fourier map and gamma for the random binning map, can be easily sampled from.

The examples are run on the three small data sets listed in Table~\ref{tab:dataset}---cadata, ijcnn1, and acoustic---whose full kernel matrices (sizes on the order $\sim10^4$ to $10^5$) are affordable to compute. Before running a machine learning task, we do not know the optimal scale parameter $\sigma$ in the kernel $k(r)=e^{-r/\sigma}$. Hence, we fix $\sigma=1$ as a reasonable choice. For a related experiment that uses the tuned $\sigma$, see Section~\ref{sec:approx.vs.predict}.

In Figure~\ref{fig:approx.err}, we plot the relative Frobenius norm error, defined as
\begin{equation}\label{eqn:F-norm.err}
E\left[\left.\Bigg\|\frac{1}{D}\sum_{l=1}^D\widetilde{K}^{(l)}-K\Bigg\|_F^2
\right/\|K\|_F^2\right]^{1/2},
\end{equation}
in straight lines. This quantity is similar to the so-called ``standard deviation to mean ratio'' in standard statistics. The lines are computed according to the results given by Theorems~\ref{thm:rff} and~\ref{thm:rbf}. Then, we plot the actual error (with the the expectation sign in~\eqref{eqn:F-norm.err} removed) as scattering crosses, overlaid with the lines.

\begin{figure}[ht]
\centering
\subfigure[cadata]{
  \includegraphics[width=0.32\linewidth]{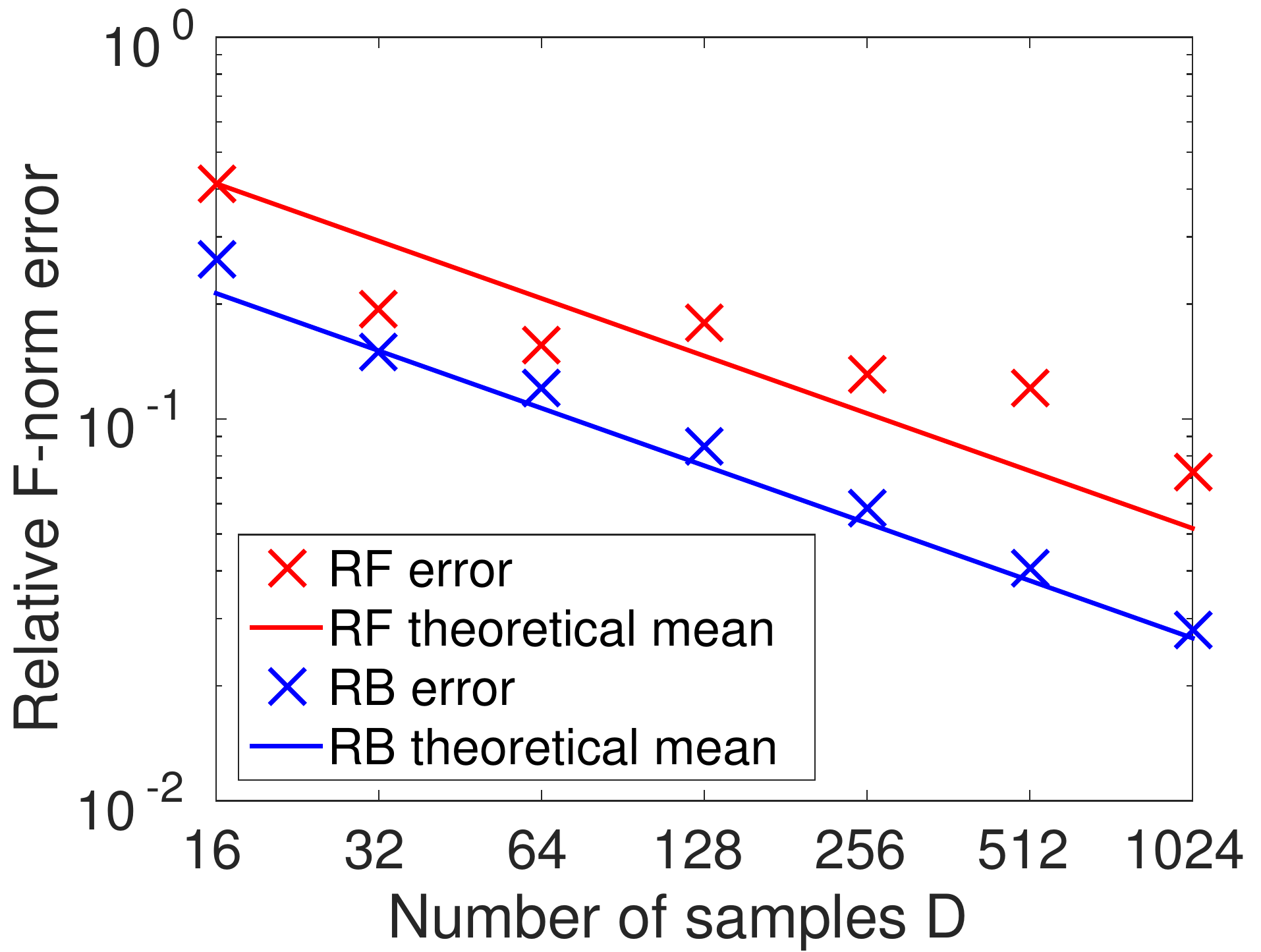}}
\subfigure[ijcnn1]{
  \includegraphics[width=0.32\linewidth]{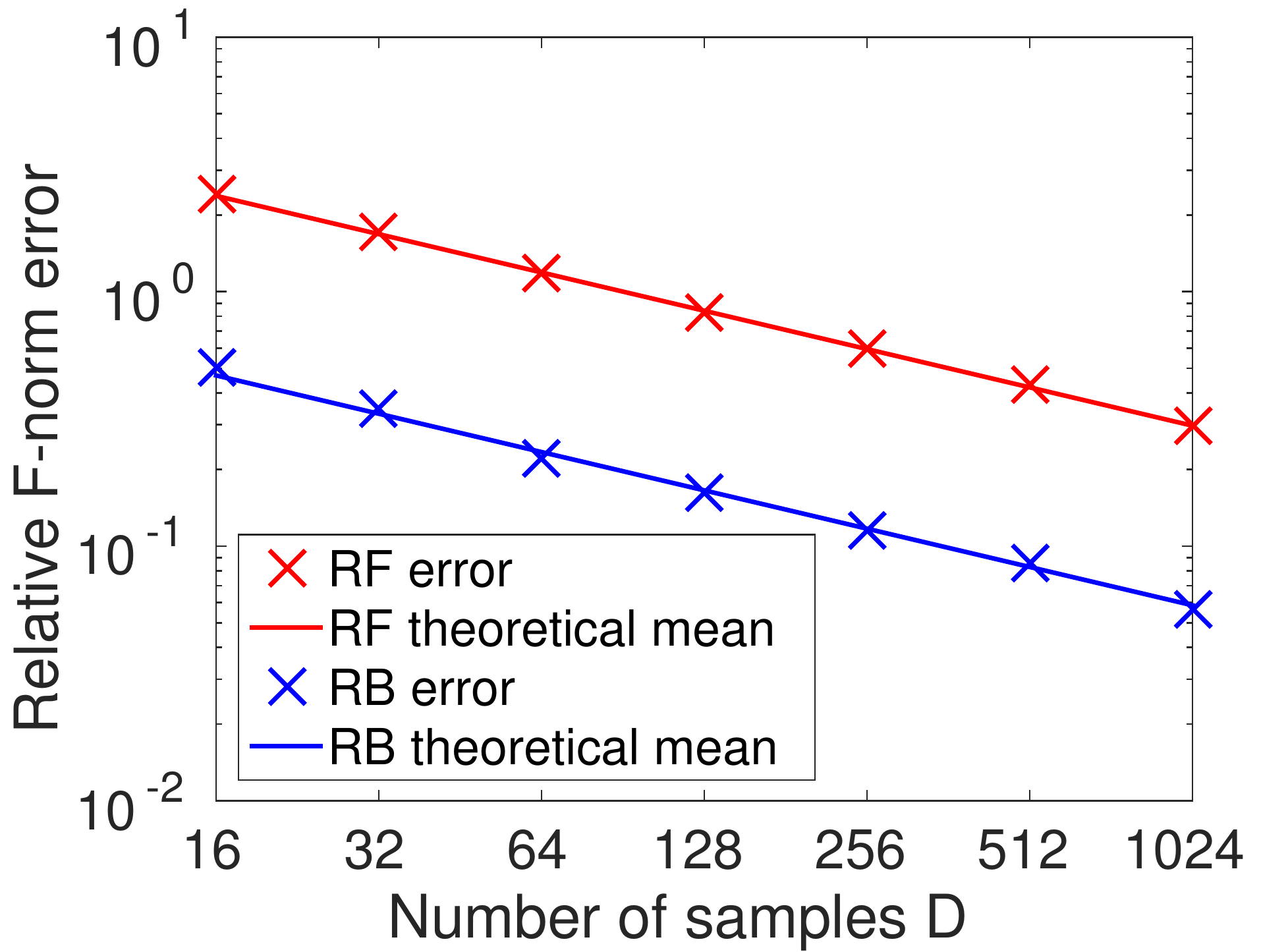}}
\subfigure[acoustic]{
  \includegraphics[width=0.32\linewidth]{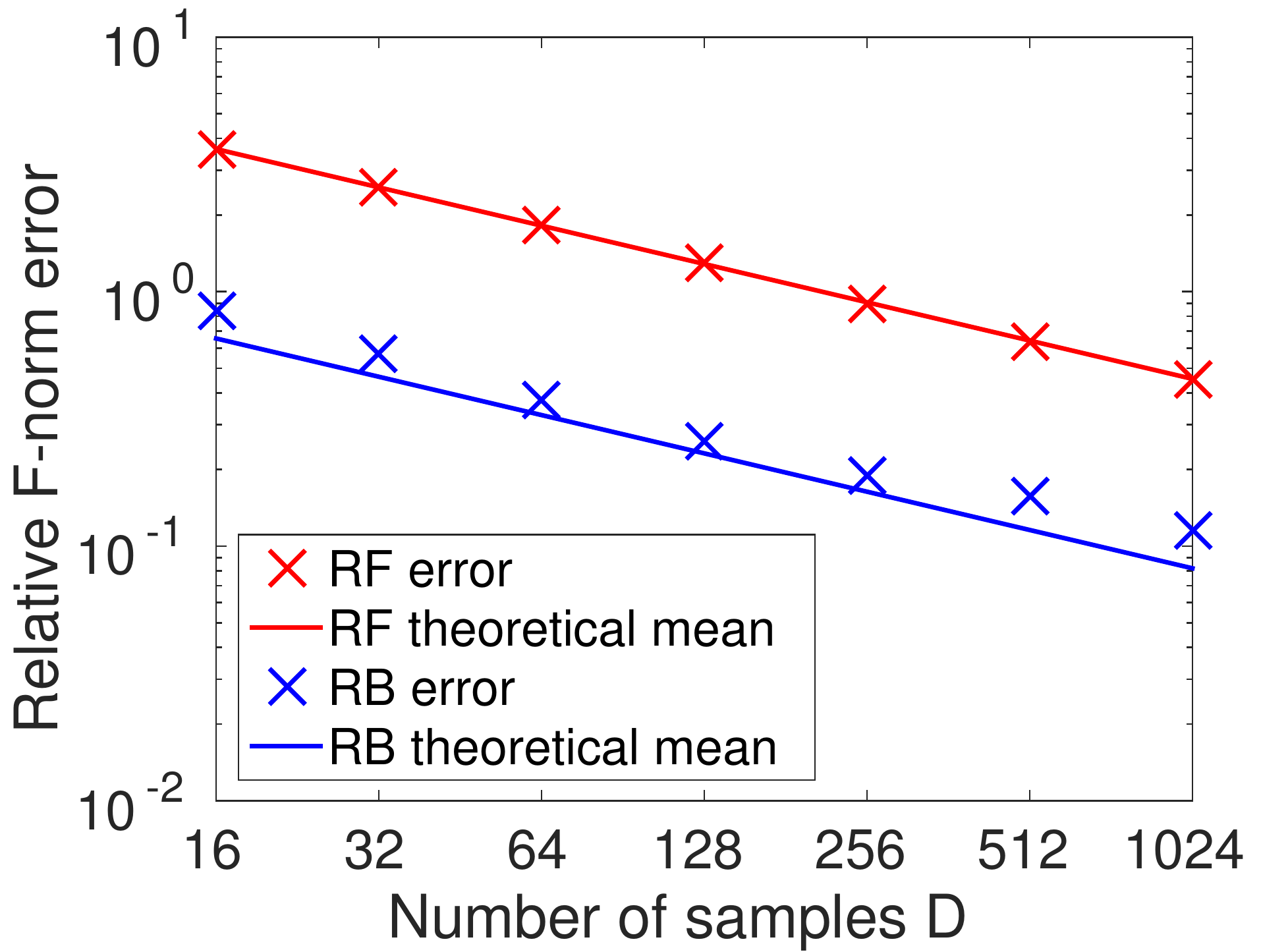}}
\caption{Matrix approximation error as a function of the sample size $D$.}
\label{fig:approx.err}
\end{figure}

One sees that the actual error is well aligned with the theoretical mean. Furthermore, there is a clear gap between the two feature maps; random binning always yields a smaller error. The largest gap corresponds to almost a one-digit difference. Clearly, for different data sets, the gap may be different; and even for the same data set, the gap may also vary when the scale parameter $\sigma$ varies. The spirit of this experiment, after all, is that the theoretical analysis gives a clear preference to random binning, empirically verified.

%%%%%%%%%%%%%%%%%%%%%%%%%%%%%%%%%%%%%%%%%%%%%%%%%%%%%%%%%%%%
\subsection{Regression/Classification}\label{sec:exp.perf}
In the next experiment, we apply the random feature maps for regression and classification in the RKHS. The unified setting is that given data $\{x_i\}_{i=1}^n$ with targets $\{y_i\}_{i=1}^n$, we minimize the risk functional
\[
\mathcal{L}(f)=\sum_{i=1}^nV(f(x_i),y_i)+\lambda\langle f,f\rangle_{\mathcal{H}_k}
\]
within the RKHS $\mathcal{H}_k$ defined by a kernel function $k$, where $V(\cdot,\cdot)$ is a loss function, $\langle\cdot,\cdot\rangle$ is the inner product associated to $\mathcal{H}_k$, and $\lambda$ is a regularization parameter. We choose to use the squared loss $V(t,y)=(t-y)^2$ as in~\cite{Rahimi2007}, because due to the Representer Theorem~\cite{Kimeldorf1970,Schoelkopf2001}, the optimal function admits a well-known closed-form expression
\[
f(x)=k_x(K+\lambda I)^{-1}y,
\]
where $k_x$ is the row vector of $k(x,x_i)$ for all $i$ and $y$ is the column vector of all targets. We will use the approximate kernel $\tilde{k}$, defined as the Euclidean inner product of the random feature maps, to replace a kernel $k$.

We perform the experiment with all data sets listed in Table~\ref{tab:dataset}. The performance metric is mean squared error (MSE) for regression and accuracy for classification. Parameters are tuned through cross validation. In particular, for random binning, the scaling factor $\rho=E[X]/\tau$ for the kernel is obtained through actually tuning the assisting parameter $\tau$, as discussed in depth in Section~\ref{sec:scaling}.

We compare random Fourier with random binning, by using two kernels for the former map (Laplace and Gaussian) and four kernels for the latter (those constructed from shifted Poisson, gamma, Nakagami, and Weilbull distributions). Note that the Laplace kernel is equivalent to the one constructed from gamma distribution according to~\eqref{eqn:k}, with a particular shape $s=2$. However, for the random binning map, the shape is considered a tuning parameter, which is not the case for the random Fourier map.

Figure~\ref{fig:regres.classi} plots the regression/classification performance when the sample size $D$ increases. One sees that the performance curves for the two random feature maps are separately clustered in general. The curves of random binning clearly indicate a better performance than do those of random Fourier. Table~\ref{tab:param} lists the tuned parameters that generate the results of Figure~\ref{fig:regres.classi}. We display only those for random binning, because the parameters for random Fourier vary significantly when the number $D$ of samples changes. One observation from the table is that the optimal shape $s$ of the gamma distribution is not always achieved by $2$. In other words, a better performance is obtained by treating $s$ as a tuning parameter.

\begin{figure}[!ht]
\centering
\subfigure[cadata]{
  \includegraphics[width=0.32\linewidth]{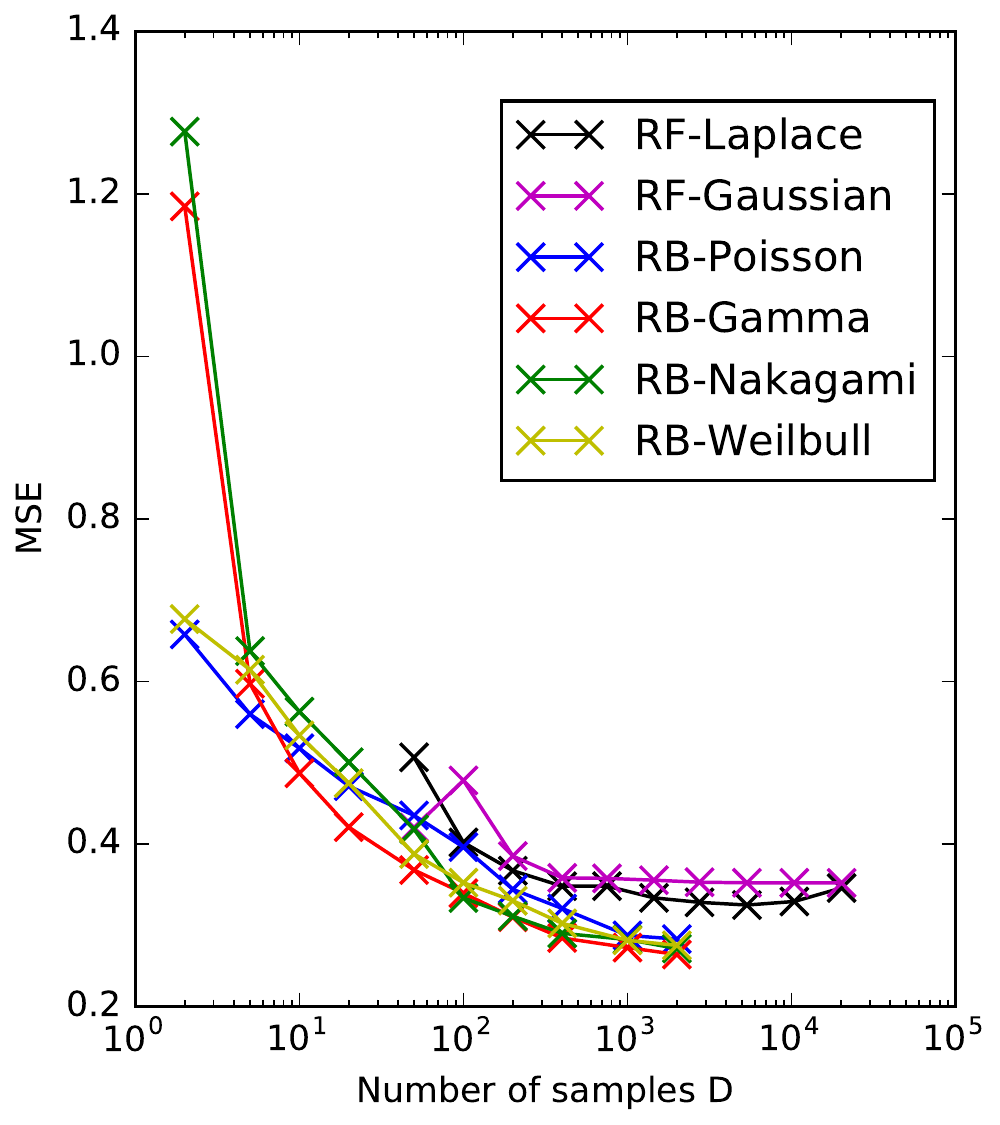}}
\subfigure[YearPredictionMSD]{
  \includegraphics[width=0.32\linewidth]{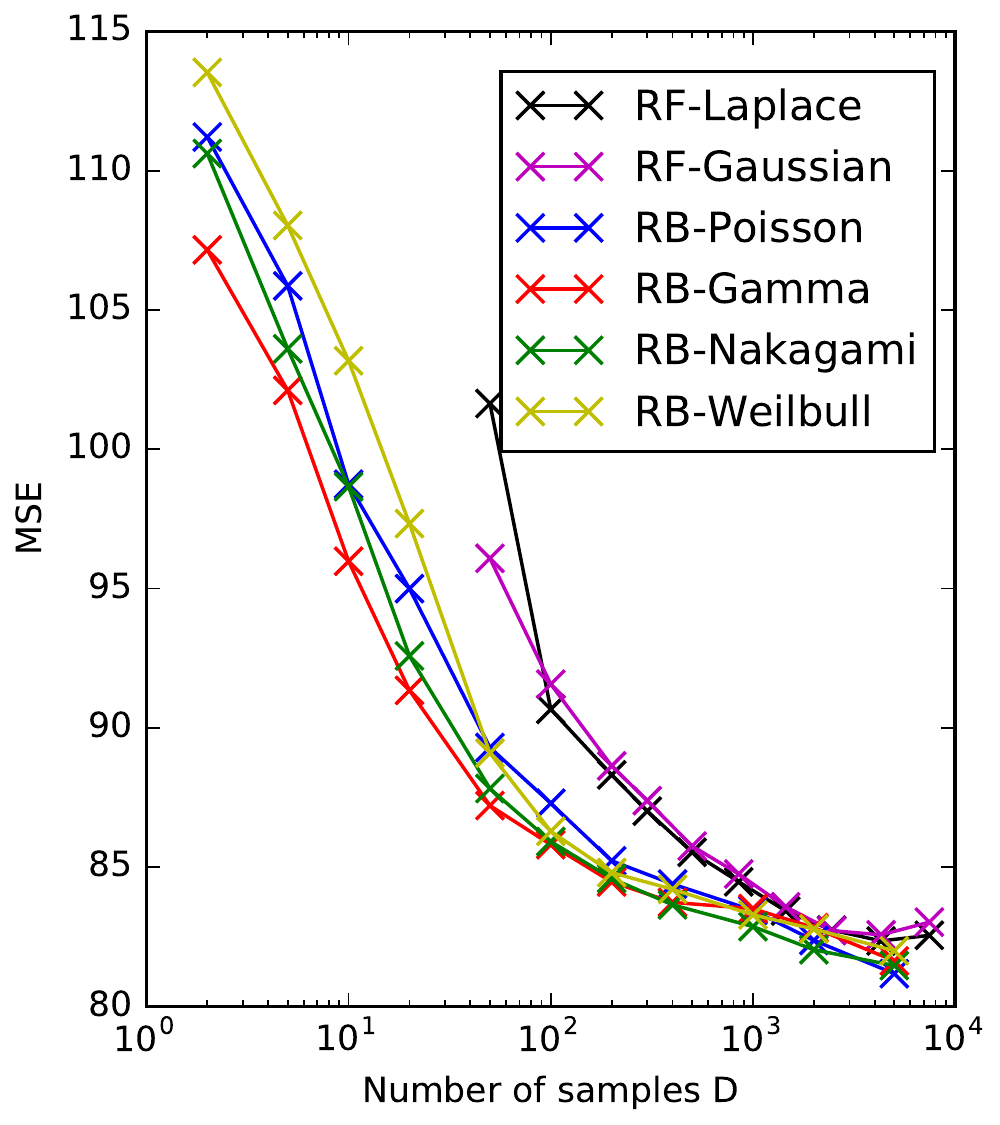}}\\
\subfigure[ijcnn1]{
  \includegraphics[width=0.32\linewidth]{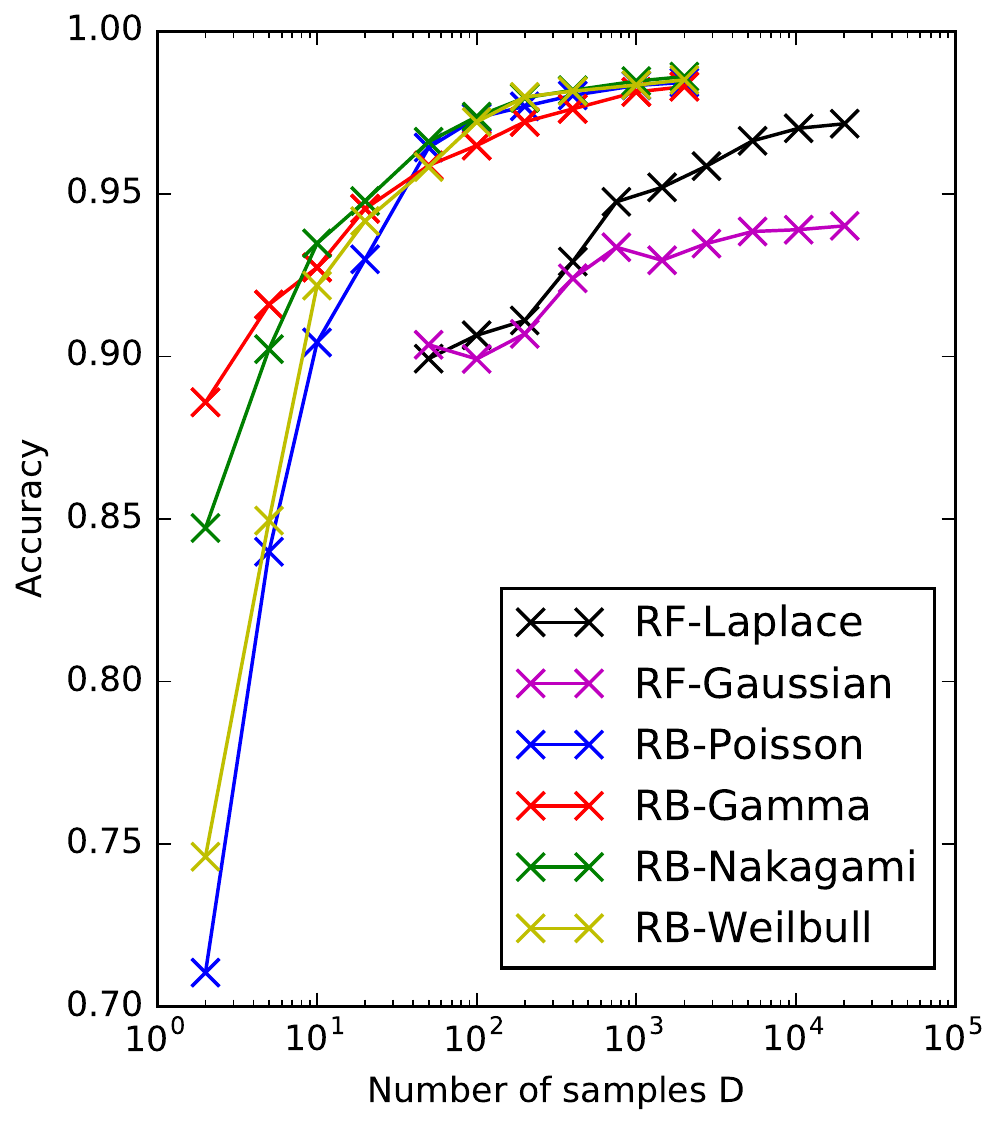}}
\subfigure[covtype.binary]{
  \includegraphics[width=0.32\linewidth]{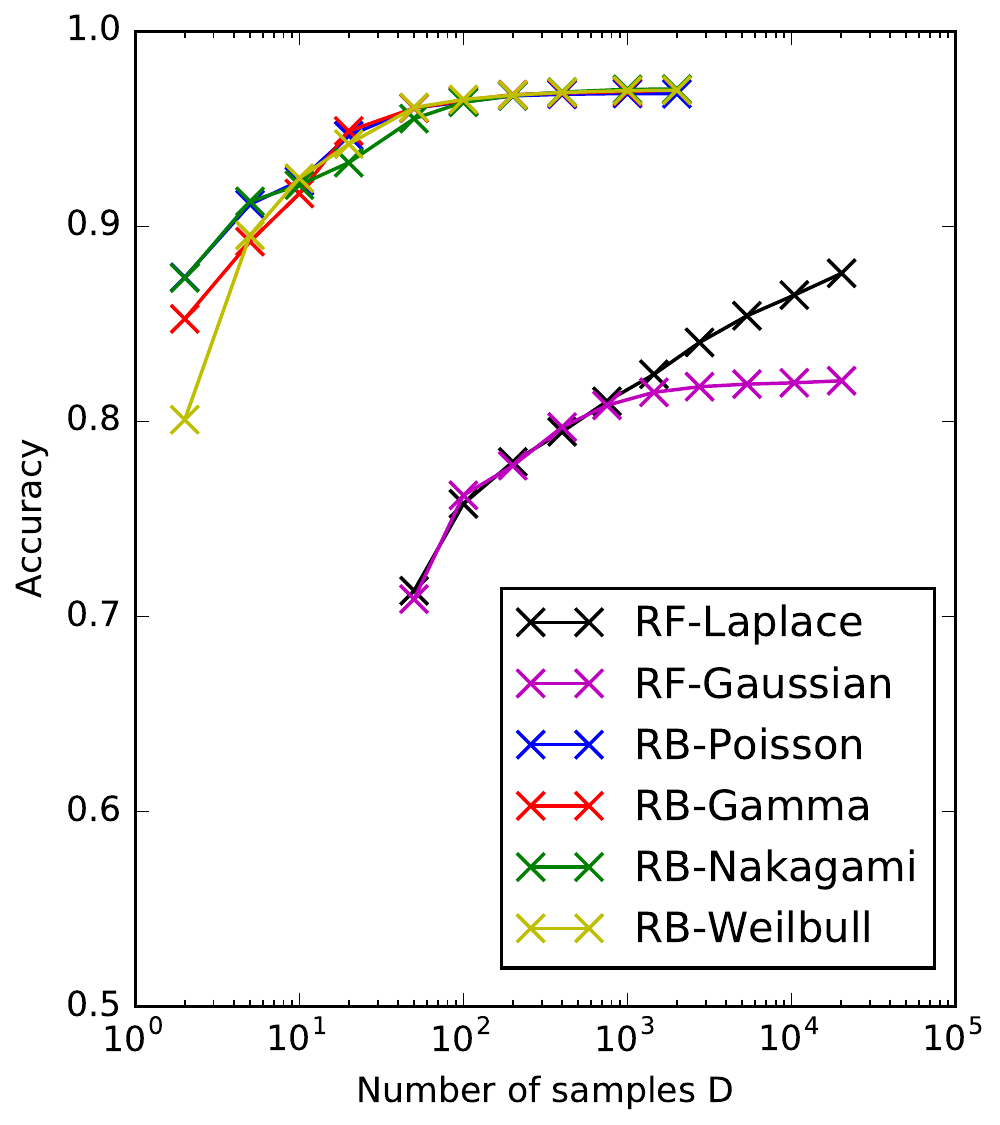}}
\subfigure[SUSY]{
  \includegraphics[width=0.32\linewidth]{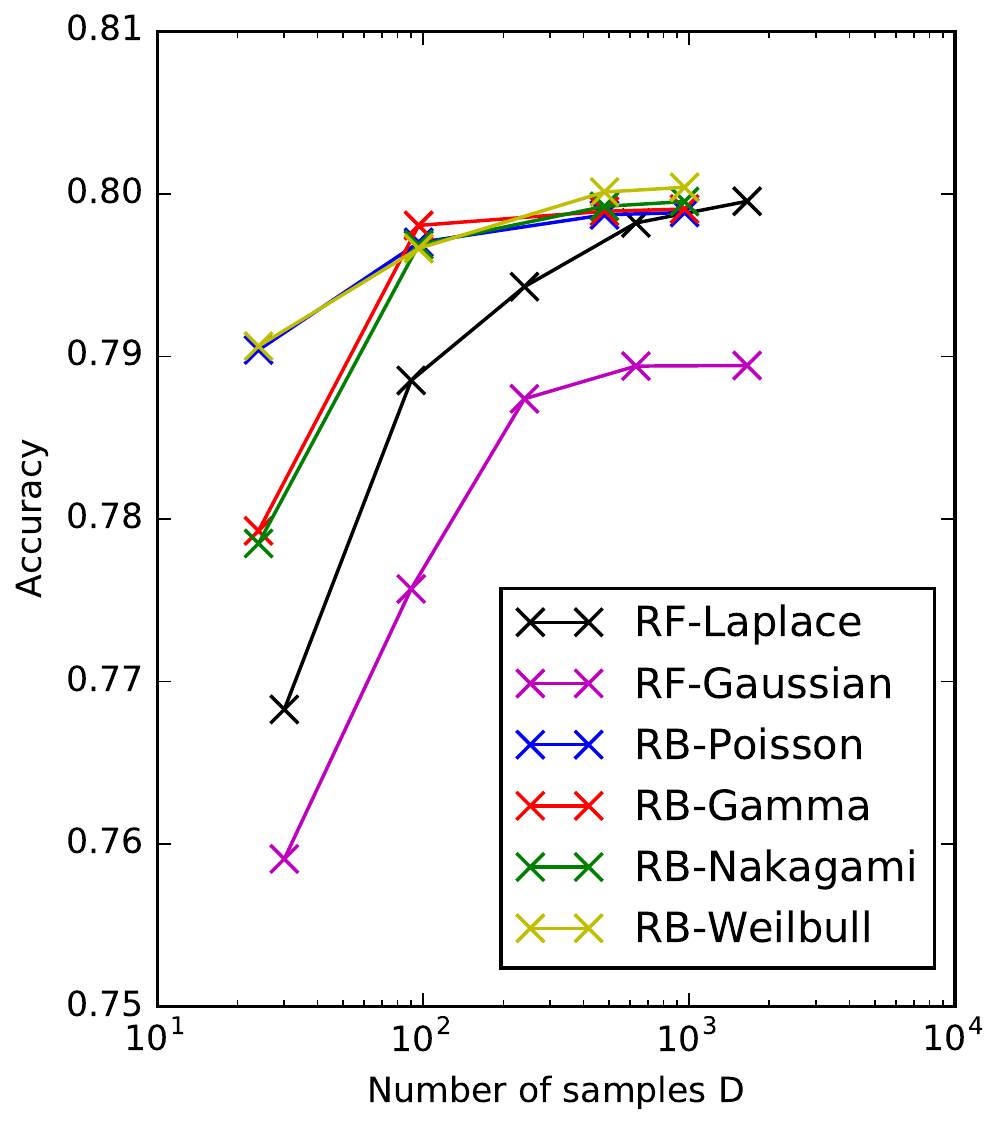}}\\
\subfigure[mnist]{
  \includegraphics[width=0.32\linewidth]{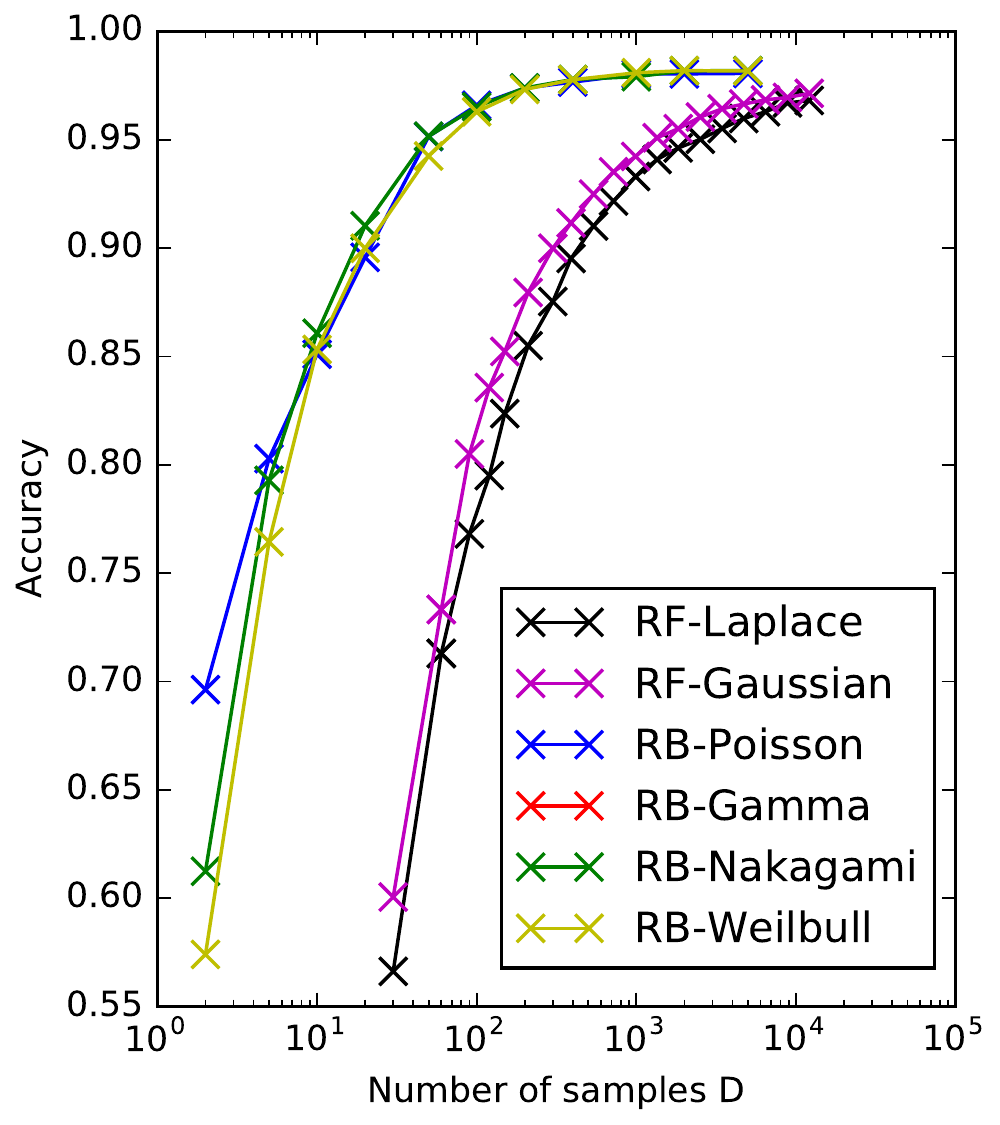}}
\subfigure[acoustic]{
  \includegraphics[width=0.32\linewidth]{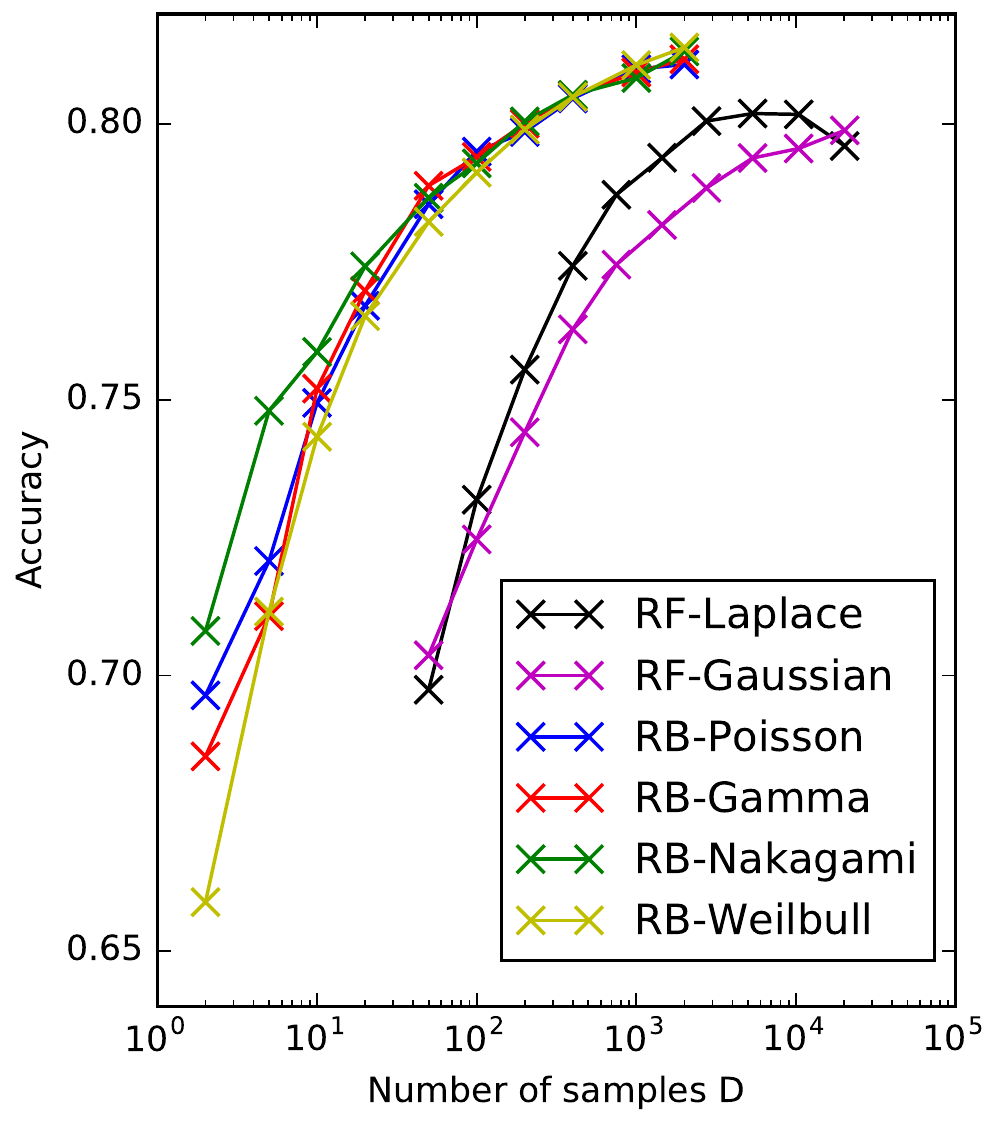}}
\subfigure[covtype]{
  \includegraphics[width=0.32\linewidth]{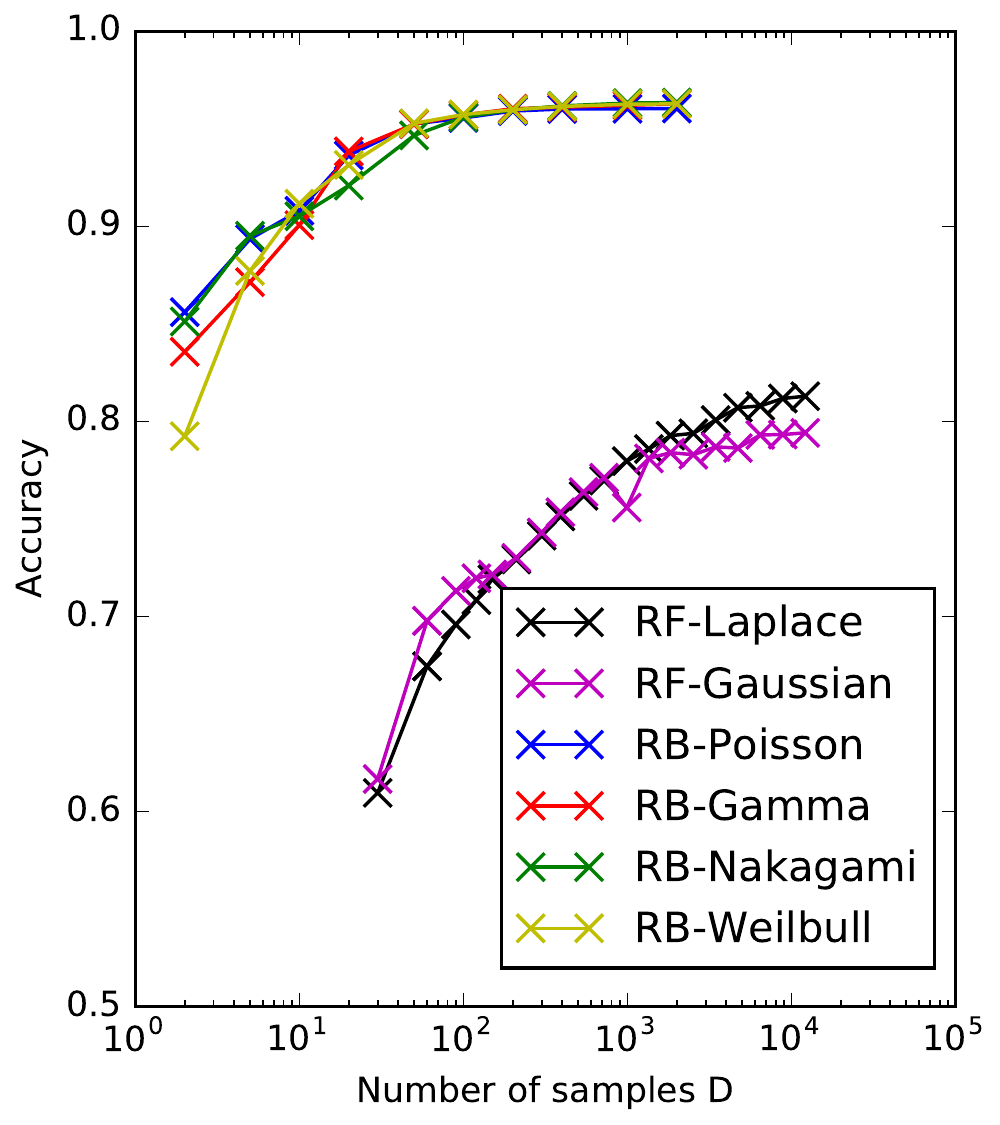}}
\caption{Regression/Classification performance as a function of sample size $D$. Top row: regression. Middle row: binary classification. Bottom row: multiclass classification.}
\label{fig:regres.classi}
\end{figure}

\begin{table}[ht]
\centering
\caption{Tuned parameters for the random binning maps.}
\label{tab:param}
\small
\setlength{\tabcolsep}{5pt}
\begin{tabular}{lr@{ }lcc}
\\[-5pt]
\multicolumn{5}{c}{cadata}\\
\hline
Distri. & \multicolumn{2}{c}{Param.} & $\tau$ & $\lambda$\\
\hline
Poisson  & $\mu=$    & 2.0 & 0.46 & 0.1\\
Gamma    & $s=$      & 0.5 & 3.16 & 0.1\\
Nakagami & $m=$      & 0.5 & 1.66 & 0.1\\
Weilbull & $\alpha=$ & 1.0 & 0.87 & 0.1\\
\hline
\end{tabular}
\hspace{3pt}
\begin{tabular}{lr@{ }lcc}
\\[-5pt]
\multicolumn{5}{c}{YearPredictionMSD}\\
\hline
Distri. & \multicolumn{2}{c}{Param.} & $\tau$ & $\lambda$\\
\hline
Poisson  & $\mu=$    & 4.0 & 0.87 & 1\\
Gamma    & $s=$      & 2.5 & 0.87 & 1\\
Nakagami & $m=$      & 2.0 & 0.87 & 1\\
Weilbull & $\alpha=$ & 3.0 & 1.66 & 1\\
\hline
\end{tabular}
\\[10pt]
\begin{tabular}{lr@{ }lcc}
\multicolumn{5}{c}{ijcnn1}\\
\hline
Distri. & \multicolumn{2}{c}{Param.} & $\tau$ & $\lambda$\\
\hline
Poisson  & $\mu=$    & 1.0 & 0.87 & 1\\
Gamma    & $s=$      & 2.0 & 0.87 & 1\\
Nakagami & $m=$      & 2.0 & 0.46 & 1\\
Weilbull & $\alpha=$ & 3.0 & 0.87 & 1\\
\hline
\end{tabular}
\hspace{3pt}
\begin{tabular}{lr@{ }lcc}
\multicolumn{5}{c}{covtype.binary}\\
\hline
Distri. & \multicolumn{2}{c}{Param.} & $\tau$ & $\lambda$\\
\hline
Poisson  & $\mu=$    & 4.0 & 0.12 & 0.1\\
Gamma    & $s=$      & 2.0 & 0.12 & 0.1\\
Nakagami & $m=$      & 1.0 & 0.24 & 0.1\\
Weilbull & $\alpha=$ & 2.0 & 0.24 & 0.01\\
\hline
\end{tabular}
\hspace{3pt}
\begin{tabular}{lr@{ }lcc}
\multicolumn{5}{c}{SUSY}\\
\hline
Distri. & \multicolumn{2}{c}{Param.} & $\tau$ & $\lambda$\\
\hline
Poisson  & $\mu=$    & 1.0 & 1.33 & 1\\
Gamma    & $s=$      & 1.5 & 1.77 & 1\\
Nakagami & $m=$      & 1.5 & 1.00 & 1\\
Weilbull & $\alpha=$ & 1.0 & 5.62 & 1\\
\hline
\end{tabular}
\\[10pt]
\begin{tabular}{lr@{ }lcc}
\multicolumn{5}{c}{mnist}\\
\hline
Distri. & \multicolumn{2}{c}{Param.} & $\tau$ & $\lambda$\\
\hline
Poisson  & $\mu=$    & 4.0 & 21.5 & 0.1\\
Gamma    & $s=$      & 2.0 & 21.5 & 0.01\\
Nakagami & $m=$      & 1.5 & 21.5 & 0.01\\
Weilbull & $\alpha=$ & 2.0 & 40.8 & 0.01\\
\hline
\end{tabular}
\hspace{3pt}
\begin{tabular}{lr@{ }lcc}
\multicolumn{5}{c}{acoustic}\\
\hline
Distri. & \multicolumn{2}{c}{Param.} & $\tau$ & $\lambda$\\
\hline
Poisson  & $\mu=$    & 0.5 & 1.66 & 1\\
Gamma    & $s=$      & 1.5 & 1.66 & 1\\
Nakagami & $m=$      & 1.5 & 0.87 & 1\\
Weilbull & $\alpha=$ & 1.0 & 5.99 & 1\\
\hline
\end{tabular}
\hspace{3pt}
\begin{tabular}{lr@{ }lcc}
\multicolumn{5}{c}{covtype}\\
\hline
Distri. & \multicolumn{2}{c}{Param.} & $\tau$ & $\lambda$\\
\hline
Poisson  & $\mu=$    & 4.0 & 0.12 & 0.1\\
Gamma    & $s=$      & 2.0 & 0.12 & 0.1\\
Nakagami & $m=$      & 1.0 & 0.24 & 0.01\\
Weilbull & $\alpha=$ & 2.0 & 0.24 & 0.01\\
\hline
\end{tabular}

\end{table}

Note that we particularly include the Gaussian kernel for comparison. Unlike other kernels, this kernel does not fall within Polya's characterization, because it is not convex on $[0,\infty)$. However, ones sees that its performance is often similar to that of Laplace. In the context of random feature maps, they are not as good as the kernels approximated by random binning.

%%%%%%%%%%%%%%%%%%%%%%%%%%%%%%%%%%%%%%%%%%%%%%%%%%%%%%%%%%%%
\subsection{Matrix Approximation Error v.s. Prediction Performance}
\label{sec:approx.vs.predict}
A link is missing between the kernel matrix approximation error and a machine learning task performance. We have shown that random binning yields a better approximation from the matrix angle, and we have also demonstrated that it yields a better prediction performance from the regression/classification angle. The purpose of the final experiment is to show that these two metrics appear to be closely related.

\begin{figure}[ht]
\centering
\subfigure[cadata]{
  \includegraphics[width=0.32\linewidth]{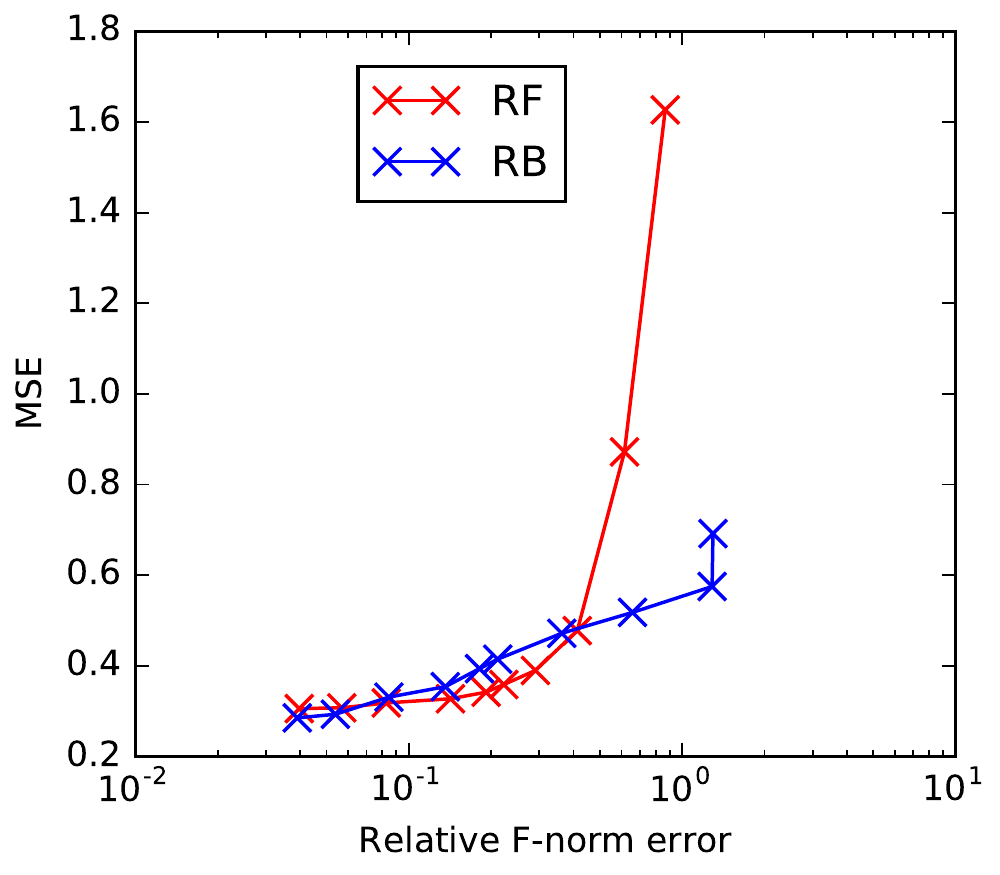}}
\subfigure[ijcnn1]{
  \includegraphics[width=0.32\linewidth]{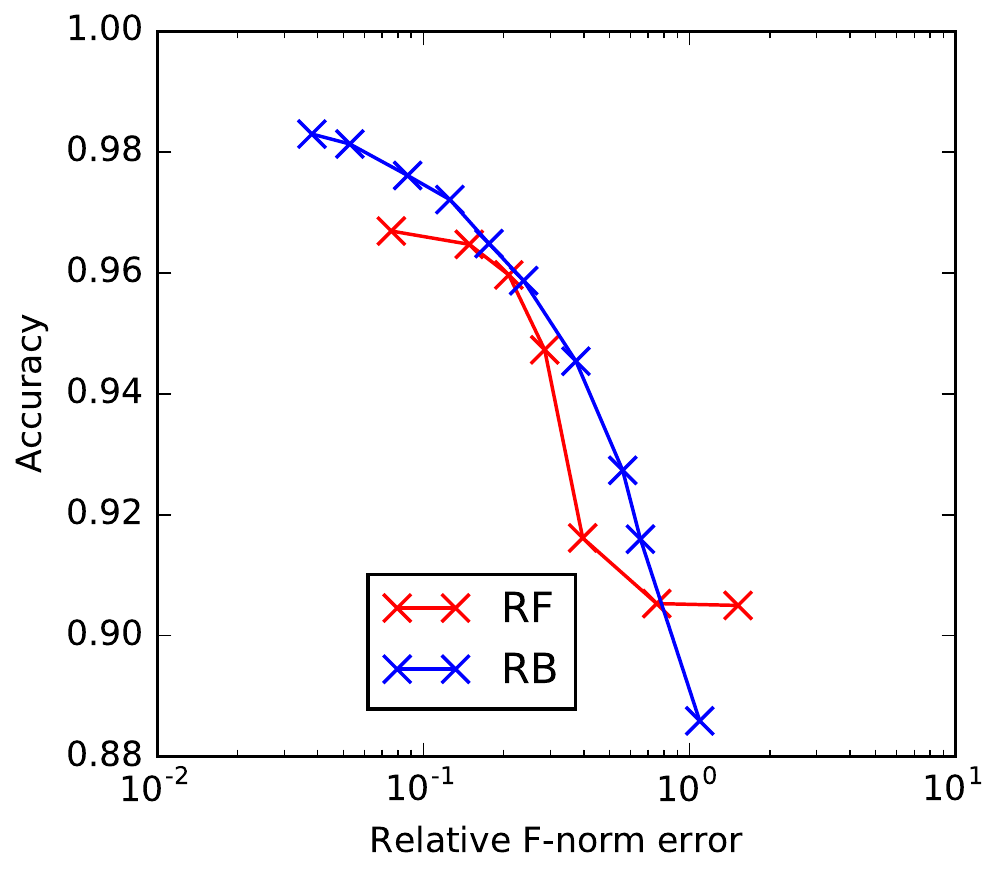}}
\subfigure[acoustic]{
  \includegraphics[width=0.32\linewidth]{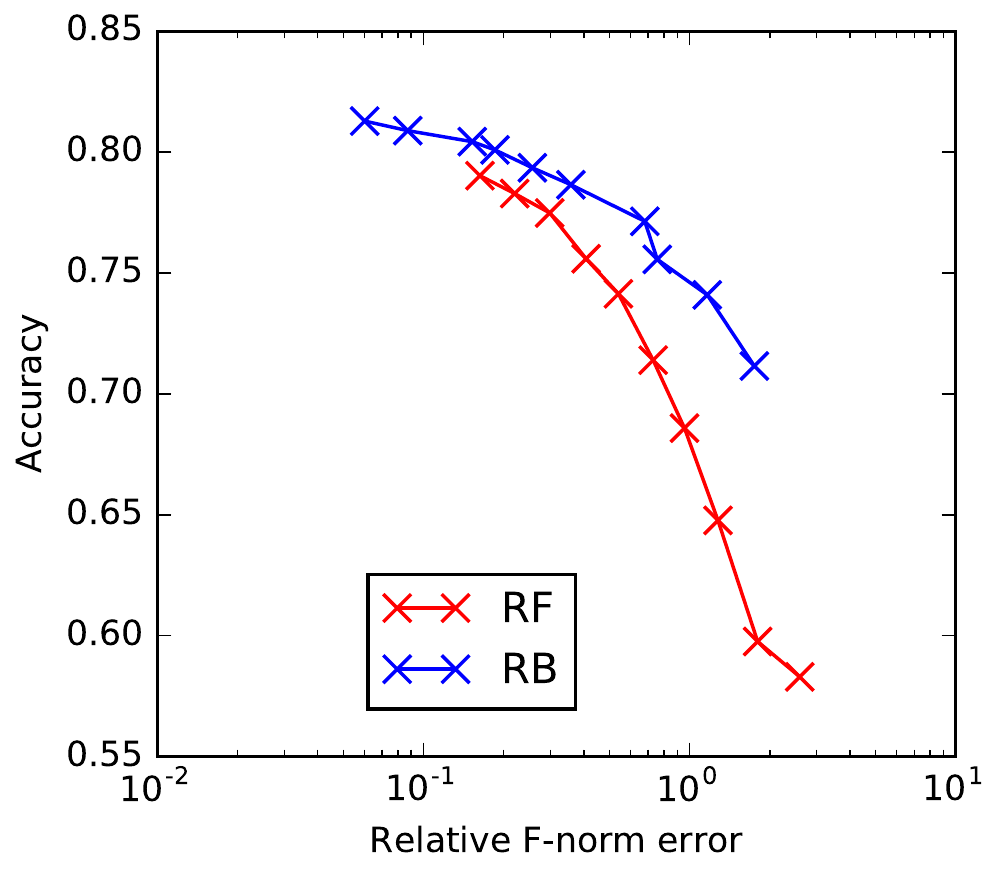}}
\caption{Matrix approximation error v.s. regression/classification performance. Laplace kernel. The two curves correspond to two methods for performing approximation.}
\label{fig:approx.vs.predict}
\end{figure}

For demonstration, we use the same data sets as in Section~\ref{sec:mat.approx}, but perform the comparison with tuned parameters obtained from the preceding subsection. In Figure~\ref{fig:approx.vs.predict} we plot the approximation error versus prediction performance. These curves are obtained for the same Laplace kernel approximated by different approaches. One sees a clear trend that a better kernel approximation implies a better prediction. Moreover, the curves of the two random feature maps are generally well aligned, indicating that the approximation method does not play a significant role in the relation between approximation error and prediction performance.

Despite the appealing empirical evidence that approximation and prediction performance are positively correlated, we, however, hesitate the conclude firmly the relation. The reader may notice in Figure~\ref{fig:regres.classi} that occasionally the prediction performance degrades when $D$ becomes too large. These scenarios occur at a large $n$, or a large $D$, that prevents us from extending the plots in Figure~\ref{fig:approx.vs.predict} for a more complete account. Incidentally, other work also shows that using the approximate kernel $\tilde{k}$ from random Fourier maps, it could happen that the prediction results are better compared with those of the nonapproximate kernel~\cite{Chen2016a}. Such phenomena appear to be beyond explanations of existing theory on the convergence of random feature maps or on the bounds of generalization error. Further theory is yet to be developed.

%%%%%%%%%%%%%%%%%%%%%%%%%%%%%%%%%%%%%%%%%%%%%%%%%%%%%%%%%%%%
\section{Summary of Contributions and Conclusion}\label{sec:conclude}
This work aims at deepening the understanding of positive-definite functions, as well as the random feature maps proposed by Rahimi and Recht~\cite{Rahimi2007} for training large-scale kernel machines. We highlight a few contributions in the following.

First, we reveal that the random binning feature map is closely tied to Polya's criterion, a less used characterization of kernels compared to that of Bochner's. We derive a number of novel kernel functions~\eqref{eqn:k.pois}, \eqref{eqn:k.gamma.s}, \eqref{eqn:k.gamma.1}, \eqref{eqn:k.chi.nu}, \eqref{eqn:k.half.normal}, \eqref{eqn:k.rayleigh}, \eqref{eqn:k.nakagami.m}, \eqref{eqn:k.nakagami.1}, and \eqref{eqn:k.weibull.alpha} based on Polya's characterization, which substantially enrich the catalog of kernels applicable to kernel methods and Gaussian processes. The work~\cite{Rahimi2007} focuses on the generation of random feature maps given a kernel; hence, the sampling distributions are restricted to those tied to known kernels. On the other hand, we exploit the relationship between kernels and distributions on the opposite direction; and show that any distribution with a positive support corresponds to a valid kernel (Corollary~\ref{cor:polya}), which allows for the construction of new kernels through applying numerous known probability distributions. Additionally, we study a few properties of the kernels constructed from Polya's characterization (Theorems~\ref{thm:special.case} and~\ref{thm:area.under.curve}) and derive the Fourier transforms of the constructed kernels mentioned earlier.

Second, we compare the two approaches for generating random feature maps---random Fourier and random binning---through an analysis of the Frobenius norm error of the approximate kernel matrix (Theorems~\ref{thm:rff} and~\ref{thm:rbf}). The analysis points to a conclusion that random binning yields a smaller error in expectation. The difference in errors is demonstrated in Figure~\ref{fig:approx.err} for a few data sets. This analysis favors the random binning approach from the kernel approximation angle. Meanwhile, empirical evidences in Section~\ref{sec:exp.perf} on regression/classification performance also lead to the same preference.

Third, the revealed fact that the sampling distribution of random binning is not limited to the gamma distribution of a particular shape, allows us to treat the shape as a tuning parameter for obtaining better regression/classification performance. Moreover, it also allows us to use other distributions for chasing the performance. Figure~\ref{fig:regres.classi} and Table~\ref{tab:param} confirm this argument.

%%%%%%%%%%%%%%%%%%%%%%%%%%%%%%%%%%%%%%%%%%%%%%%%%%%%%%%%%%%%
\section*{Acknowledgment}
We would like to thank Michael Stein and Haim Avron for helpful discussions. J. Chen is supported in part by the XDATA program of the Defense Advanced Research Projects Agency (DARPA), administered through Air Force Research Laboratory contract FA8750-12-C-0323. 
D. Cheng and Y. Liu are supported in part by the NSF Research Grant IIS-1254206 and IIS-1134990. The views and conclusions are those of the authors and should not be interpreted as representing the official policies of the funding agency, or the U.S. Government.
Part of the work was done while D. Cheng was a summer intern at IBM Research.

%%%%%%%%%%%%%%%%%%%%%%%%%%%%%%%%%%%%%%%%%%%%%%%%%%%%%%%%%%%%
\appendix
\section{Special Functions Seen in Section~\ref{sec:dist}}
\label{app:special.function}
Gamma function
\[
\Gamma(s)=\int_0^{\infty}x^{s-1}e^{-x}\,dx,
\qquad \Re(s)>0.
\]
Upper incomplete gamma function
\[
\Gamma(s,t)=\int_t^{\infty}x^{s-1}e^{-x}\,dx,
\qquad \Re(s)\ge0.
\]
Exponential integral
\[
E_1(z)=\int_z^{\infty}\frac{e^{-t}}{t}\,dt,
\qquad |\arg(z)|<\pi.
\]
Kummer's confluent hypergeometric function
\[
M(a,b,z)=\sum_{n=0}^{\infty}\frac{a^{(n)}z^n}{b^{(n)}n!},
\qquad\text{where } a^{(0)}=1,\,\, a^{(n)}=a(a+1)(a+2)\cdots(a+n-1).
\]
Error function
\[
\erf(x)=\frac{1}{\sqrt{\pi}}\int_{-x}^x e^{-t^2}\,dt.
\]
Imaginary error function
\[
\erfi(x)=-\im\erf(\im x).
\]
Complementary error function
\[
\erfc(x)=1-\erf(x).
\]

%%%%%%%%%%%%%%%%%%%%%%%%%%%%%%%%%%%%%%%%%%%%%%%%%%%%%%%%%%%%
\bibliographystyle{plain}
\bibliography{reference}

%%%%%%%%%%%%%%%%%%%%%%%%%%%%%%%%%%%%%%%%%%%%%%%%%%%%%%%%%%%%
\begin{figure}[ht]
\centering
\subfigure[$X\sim$ shifted Pois$(\mu)$]{
  \includegraphics[width=.4\linewidth]{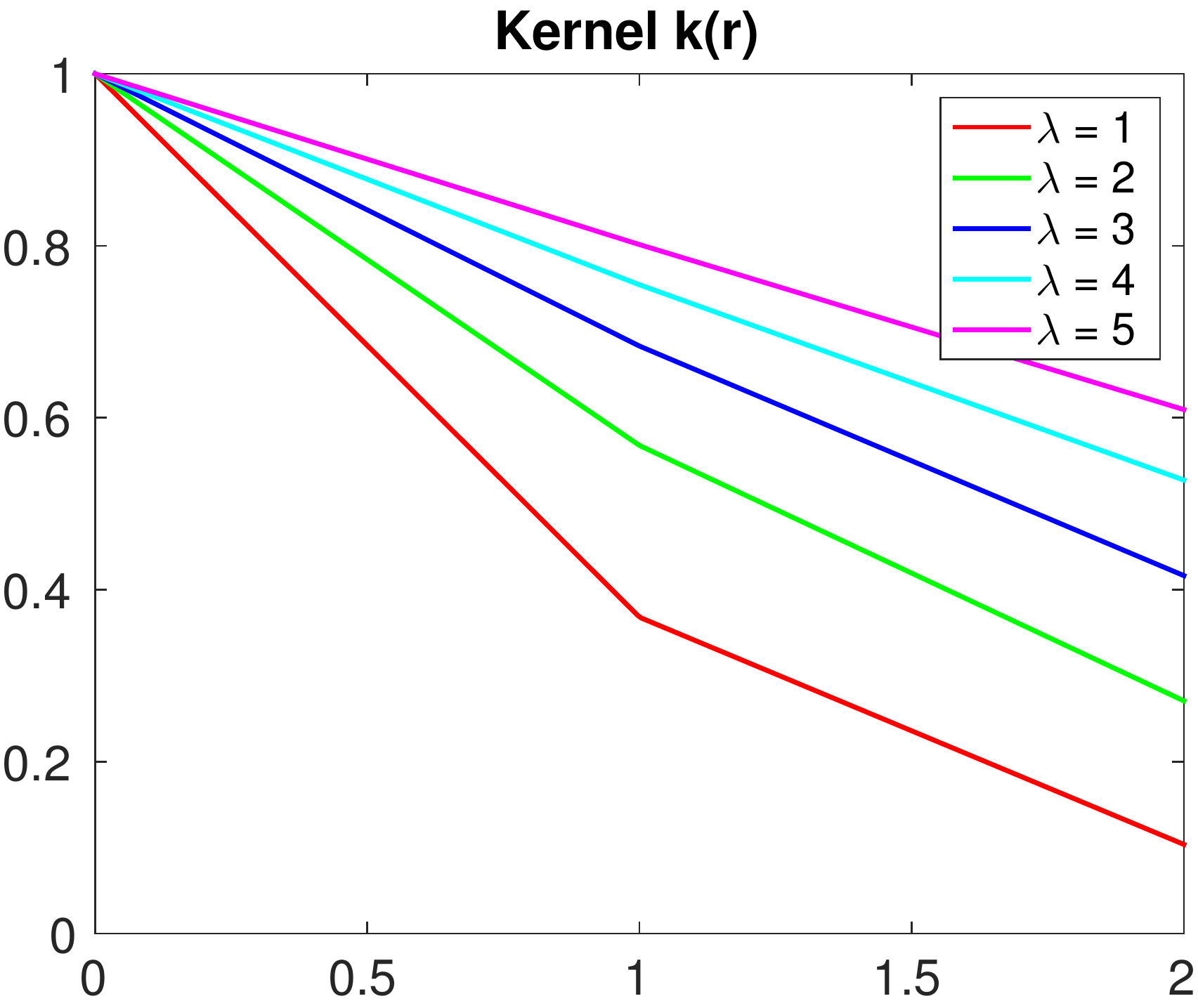}\hspace{0.1cm}
  \includegraphics[width=.4\linewidth]{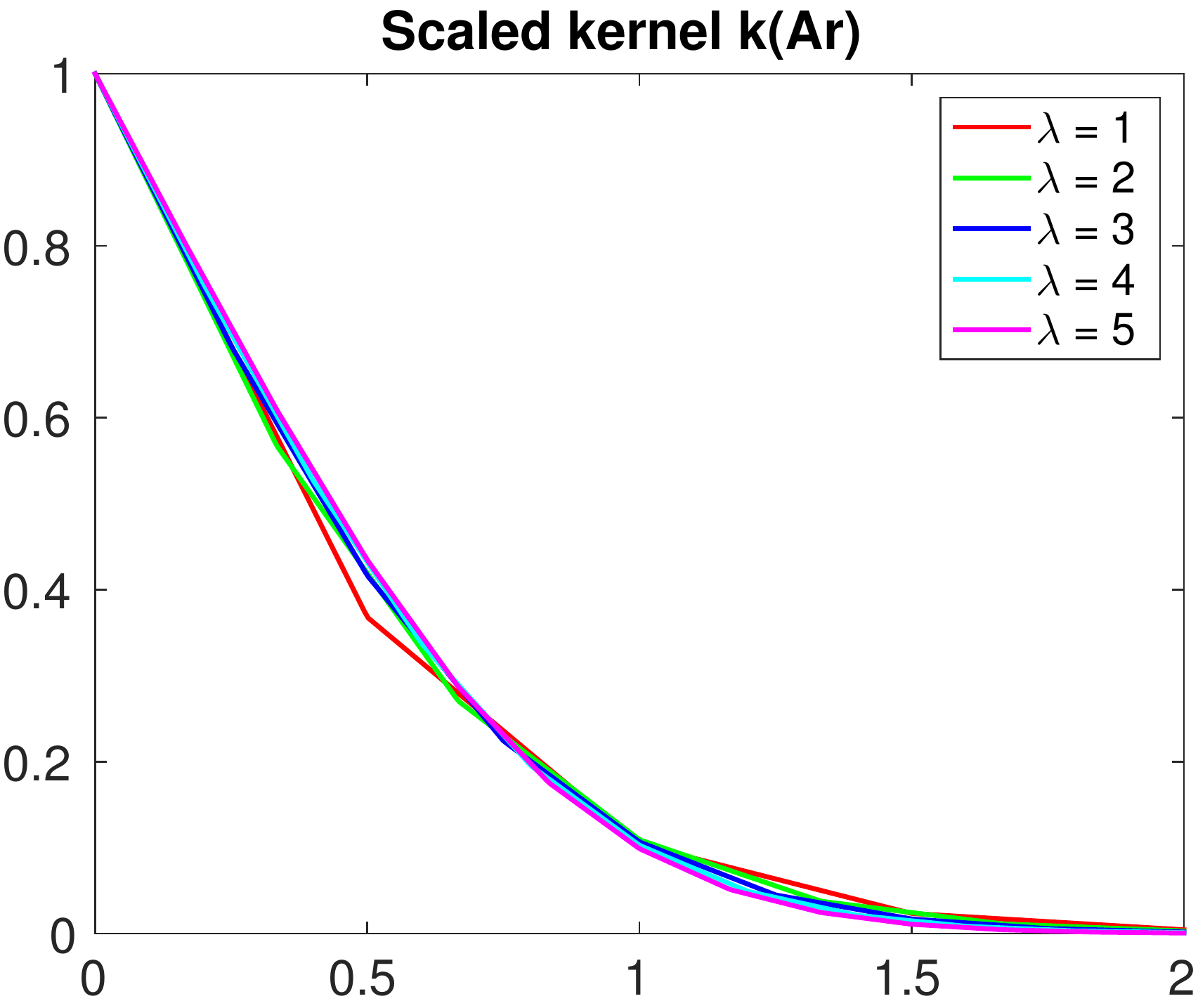}\hspace{0.1cm}}
\subfigure[$X\sim$ Gamma$(s,1)$]{
  \includegraphics[width=.4\linewidth]{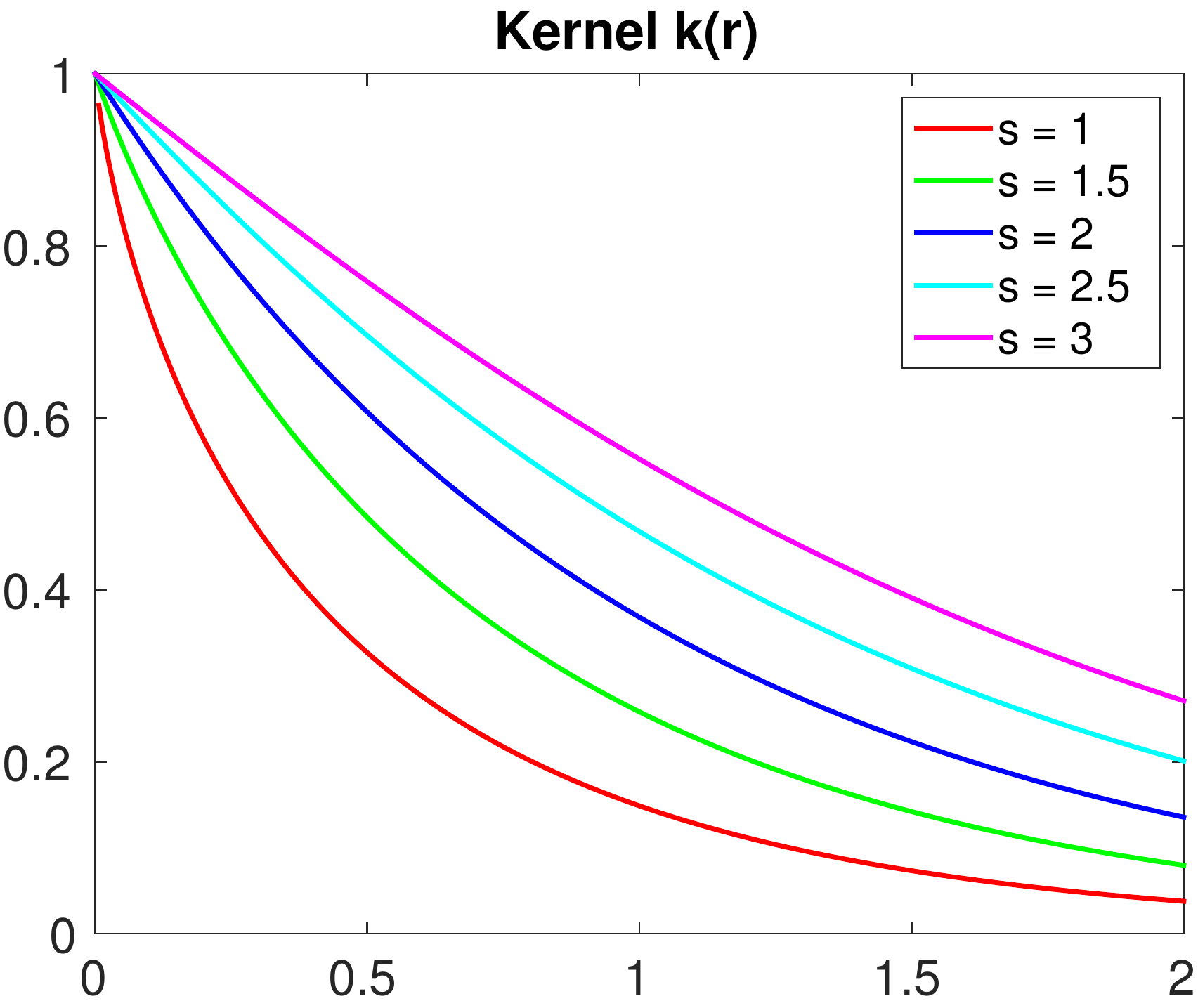}\hspace{0.1cm}
  \includegraphics[width=.4\linewidth]{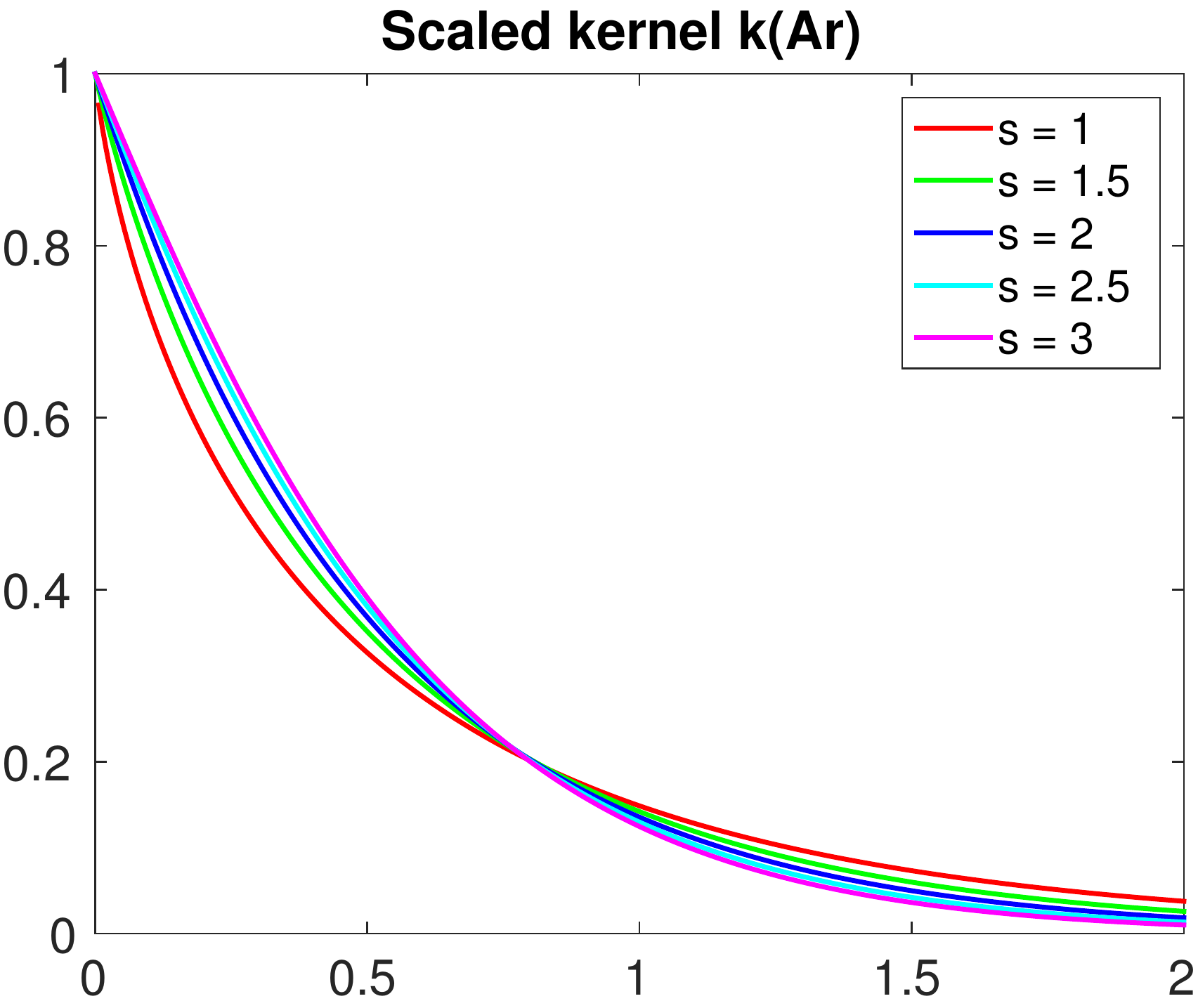}\hspace{0.1cm}}
\caption{Constructed kernels from different probability distributions. Left: Unscaled. Right: Scaled by $A=E[X]$.}
\label{fig:kernel.var}
\end{figure}

\begin{figure}[ht]
\centering
\subfigure[$X\sim$ Nakagami$(m,1)$]{
  \includegraphics[width=.4\linewidth]{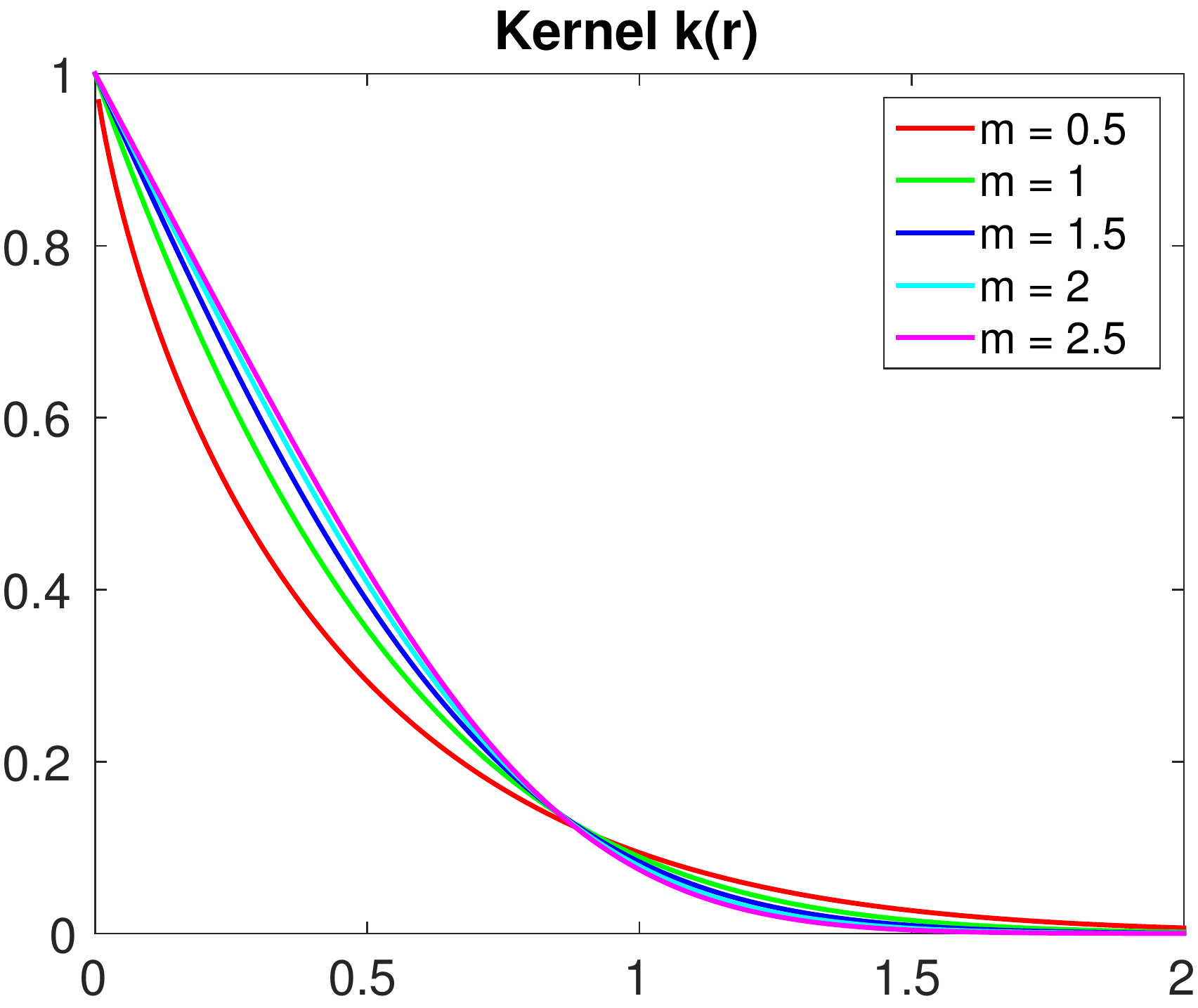}\hspace{0.1cm}
  \includegraphics[width=.4\linewidth]{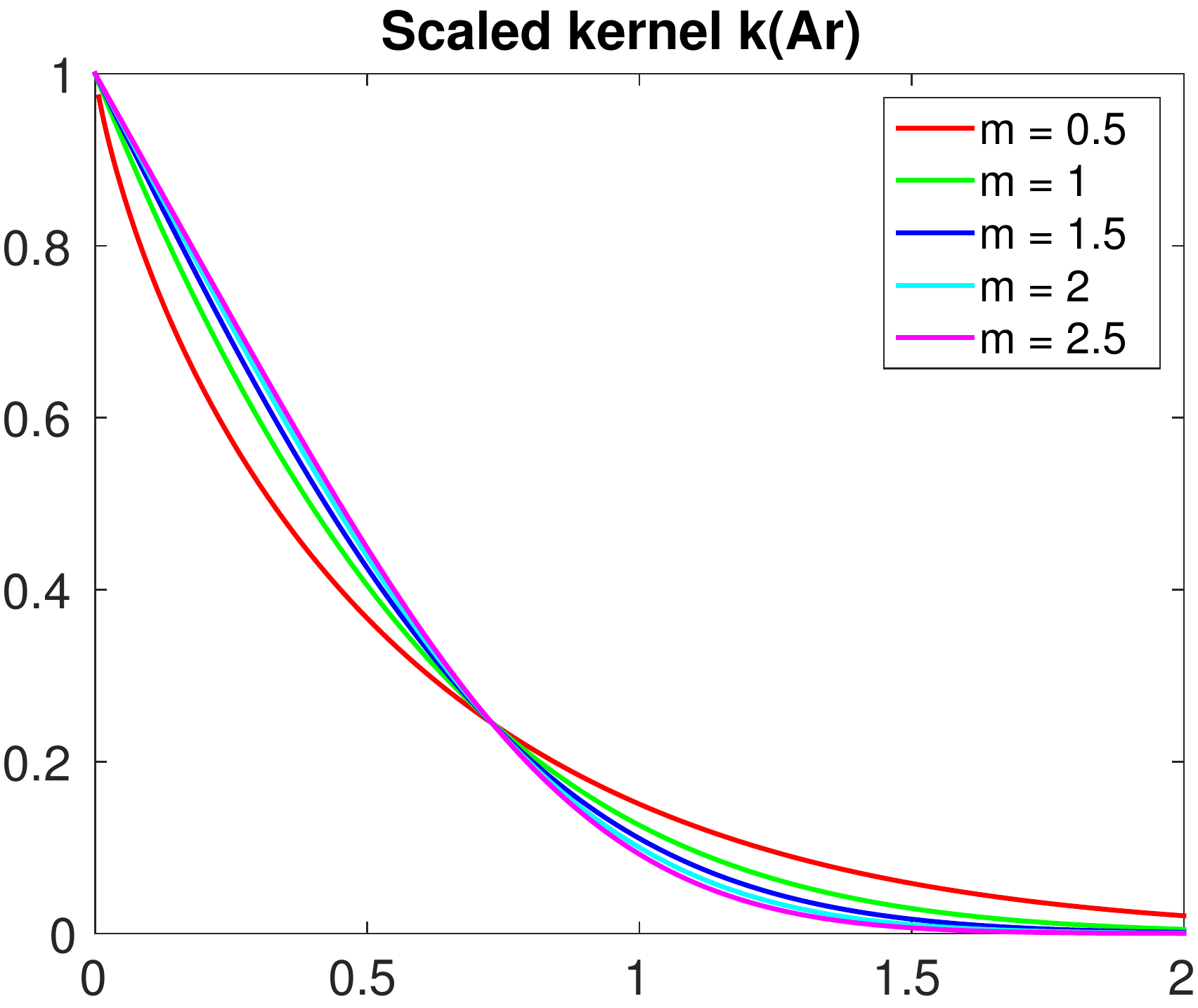}\hspace{0.1cm}}
\subfigure[$X\sim$ Weibull$(1,\alpha)$]{
  \includegraphics[width=.4\linewidth]{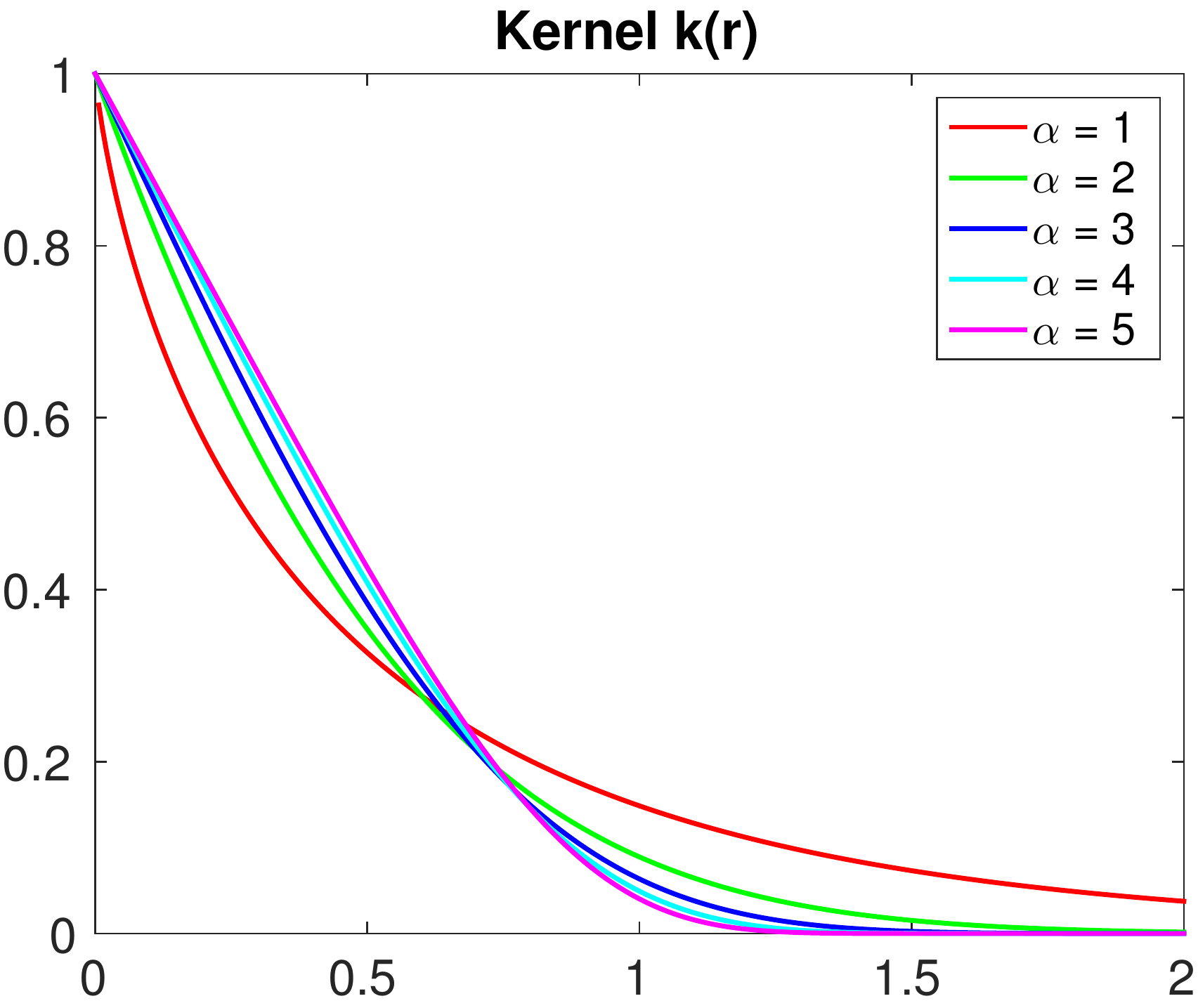}\hspace{0.1cm}
  \includegraphics[width=.4\linewidth]{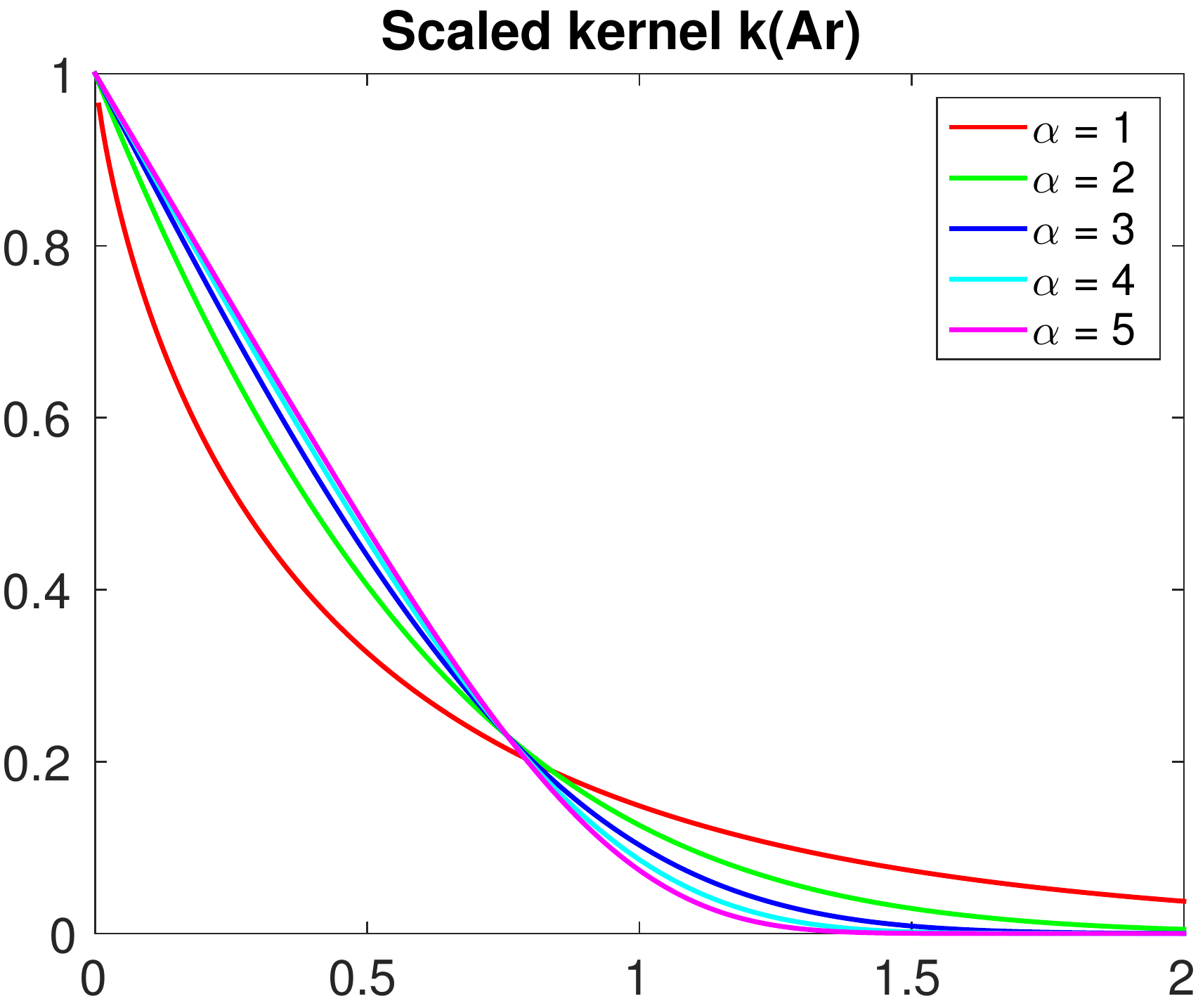}\hspace{0.1cm}}
\caption{(Continued from Figure~\ref{fig:kernel.var}) Constructed kernels from different probability distributions. Left: Unscaled. Right: Scaled by $A=E[X]$.}
\label{fig:kernel.var2}
\end{figure}

%%%%%%%%%%%%%%%%%%%%%%%%%%%%%%%%%%%%%%%%%%%%%%%%%%%%%%%%%%%%
% afterpage sometimes causes compilation error
%\afterpage{\small
\begin{landscape}
\renewcommand{\arraystretch}{2}
\begin{table}[ht]
\centering
\caption{Probability distributions and corresponding kernels}
\label{tab:dist.kernel}
\begin{tabular}{cccccccc}
\hline
\rowcolor{gray!25}
Distribution & pmf & $E[X]$ & $k(r)$ & $\mathcal{F}[k](t)$ & Parameter choice/tuning & \\
\hline
Shifted Pois$(\mu)$ 
& $\displaystyle\frac{\mu^{x-1}e^{-\mu}}{(x-1)!}$, \,\, $x=1,2,\ldots$
& $\mu+1$
& \eqref{eqn:k.pois}
& \eqref{eqn:ft.pois}
& $\mu=1,2,3,\ldots$\\[5pt]
\hline
\rowcolor{gray!25}
Distribution & pdf & $E[X]$ & $k(r)$ & $\mathcal{F}[k](t)$ & Parameter choice/tuning & \\
\hline
Gamma$(s,\theta)$ 
& $\displaystyle\frac{x^{s-1}e^{-x/\theta}}{\Gamma(s)\theta^s}$
& $\theta s$
& \pbox[c]{\textwidth}{\relax\ifvmode\centering\fi \eqref{eqn:k.gamma.s} $s>1$\\ \eqref{eqn:k.gamma.1} $s=1$}
& \pbox[c]{\textwidth}{\relax\ifvmode\centering\fi \eqref{eqn:ft.gamma.s} $s>1$\\ \eqref{eqn:ft.gamma.1} $s=1$}
& $s=\frac{1}{2},1,\frac{3}{2},\ldots$;\quad$\theta=1$\\[5pt]
\hline
Nakagami$(m,\Omega)$ 
& $\displaystyle\frac{2m^mx^{2m-1}e^{-mx^2/\Omega}}{\Gamma(m)\Omega^m}$
& $\displaystyle\frac{\Gamma(m+\frac{1}{2})}{\Gamma(m)}\left(\frac{\Omega}{m}\right)^{1/2}$
& \pbox[c]{\textwidth}{\relax\ifvmode\centering\fi \eqref{eqn:k.nakagami.m} $m>1/2$\\ \eqref{eqn:k.nakagami.1} $m=1/2$}
& \pbox[c]{\textwidth}{\relax\ifvmode\centering\fi \eqref{eqn:ft.nakagami.m} $m>1/2$\\ \eqref{eqn:ft.nakagami.1} $m=1/2$}
& $m=\frac{1}{2},1,\frac{3}{2},\ldots$;\quad$\Omega=1$\\[5pt]
\hline
Weibull$(\theta,\alpha)$ 
& $\displaystyle\frac{\alpha}{\theta}\left(\frac{x}{\theta}\right)^{\alpha-1}e^{-(x/\theta)^{\alpha}}$
& $\theta\Gamma(1+1/\alpha)$
& \pbox[c]{\textwidth}{\relax\ifvmode\centering\fi \eqref{eqn:k.weibull.alpha} $\alpha>1$\\ \eqref{eqn:k.gamma.1} $\alpha=1$}
& \pbox[c]{\textwidth}{\relax\ifvmode\centering\fi \hspace{10pt}\phantom{$\alpha>1$}\\ \eqref{eqn:ft.gamma.1} $\alpha=1$}
& $\theta=1$;\quad$\alpha=1,2,3,\ldots$\\[5pt]
\hline
\rowcolor{gray!25}
Distribution & pdf & $E[X]$ & $k(r)$ & $\mathcal{F}[k](t)$ & Same as & \\
\hline
Exp$(\theta)$ 
& $\displaystyle\frac{1}{\theta}e^{-x/\theta}$
& $\theta$
& \eqref{eqn:k.gamma.1}
& \eqref{eqn:ft.gamma.1}
& \pbox[c]{\textwidth}{\relax\ifvmode\centering\fi Gamma$(1,\theta)$\\ Weibull$(\theta,1)$}\\[5pt]
\hline
$\chi_{\nu}^2$ 
& $\displaystyle\frac{x^{\nu/2-1}e^{-x/2}}{2^{\nu/2}\Gamma(\nu/2)}$
& $\nu$
& \eqref{eqn:k.gamma.s}
& \eqref{eqn:ft.gamma.s}
& Gamma$(\nu/2,2)$\\[5pt]
\hline
$\chi_{\nu}$ 
& $\displaystyle\frac{2^{1-\nu/2}}{\Gamma(\nu/2)}x^{\nu-1}e^{-x^2/2}$
& $\displaystyle\sqrt{2}\frac{\Gamma((\nu+1)/2)}{\Gamma(\nu/2)}$
& \pbox[c]{\textwidth}{\relax\ifvmode\centering\fi \eqref{eqn:k.chi.nu} $\nu>1$\\ \eqref{eqn:k.half.normal} $\nu=1$}
& \pbox[c]{\textwidth}{\relax\ifvmode\centering\fi \eqref{eqn:ft.chi.nu} $\nu>1$\\ \eqref{eqn:ft.half.normal} $\nu=1$}
& Nakagami$(\nu/2, \nu)$\\[5pt]
\hline
HN$(\sigma)$ 
& $\displaystyle\frac{\sqrt{2}}{\sigma\sqrt{\pi}}\exp\left(-\frac{x^2}{2\sigma^2}\right)$
& $\displaystyle\frac{\sigma\sqrt{2}}{\sqrt{\pi}}$
& \eqref{eqn:k.half.normal}
& \eqref{eqn:ft.half.normal}
& Nakagami$(1/2, \sigma^2)$\\[5pt]
\hline
Rayleigh($\sigma$) 
& $\displaystyle\frac{x}{\sigma^2}e^{-x^2/(2\sigma^2)}$
& $\displaystyle\sigma\sqrt{\frac{\pi}{2}}$
& \eqref{eqn:k.rayleigh}
& \eqref{eqn:ft.rayleigh}
& \pbox[c]{\textwidth}{\relax\ifvmode\centering\fi Nakagami$(1, 2\sigma^2)$\\ Rice$(0,\sigma)$}\\[5pt]
\hline
\end{tabular}
\end{table}
\end{landscape}
%}

\end{document}